\def\eqref#1{equation~\ref{#1}}
\def\1{\bm{1}}
\DeclareMathAlphabet{\mathsfit}{\encodingdefault}{\sfdefault}{m}{sl}
\SetMathAlphabet{\mathsfit}{bold}{\encodingdefault}{\sfdefault}{bx}{n}
\newtheorem{thm}{\bf Theorem}[section]
\newtheorem{assumption}{Assumption}[section]
\title{Towards Plastic and Stable Exemplar-Free Incremental Learning: A Dual-Learner Framework with Cumulative Parameter Averaging}
\definecolor{darkblue}{rgb}{0.0,0.0,0.66}  
\begin{document}

\maketitle

\vspace{-1.5cm}
\begin{center}
\large 
Wenju Sun \ \ \ \  Qingyong Li \ \ \ \  Wen Wang \ \ \ \  Yangli-ao Geng \footnote{Corresponding author: Yangli-ao Geng (gengyla@bjtu.edu.cn).}

Key Laboratory of Big Data \& Artificial Intelligence in Transportation, Beijing Jiaotong University, Beijing, 100044, China

{\tt\small \{SunWenJu,liqy,wangwen,gengyla\}@bjtu.edu.cn}
\end{center}

\begin{abstract}
The dilemma between plasticity and stability presents a significant challenge in Incremental Learning (IL), especially in the exemplar-free scenario where accessing old-task samples is strictly prohibited during the learning of a new task. A straightforward solution to this issue is learning and storing an independent model for each task, known as Single Task Learning (STL). Despite the linear growth in model storage with the number of tasks in STL, we empirically discover that averaging these model parameters can potentially preserve knowledge across all tasks. Inspired by this observation, we propose a \textbf{D}ual-\textbf{L}earner framework with \textbf{C}umulative \textbf{P}arameter \textbf{A}veraging (DLCPA). DLCPA employs a dual-learner design: a plastic learner focused on acquiring new-task knowledge and a stable learner responsible for accumulating all learned knowledge. The knowledge from the plastic learner is transferred to the stable learner via cumulative parameter averaging. Additionally, several task-specific classifiers work in cooperation with the stable learner to yield the final prediction. Specifically, when learning a new task, these modules are updated in a cyclic manner: i) the plastic learner is initially optimized using a self-supervised loss besides the supervised loss to enhance the feature extraction robustness; ii) the stable learner is then updated with respect to the plastic learner in a cumulative parameter averaging manner to maintain its task-wise generalization; iii) the task-specific classifier is accordingly optimized to align with the stable learner. Experimental results on CIFAR-100 and Tiny-ImageNet show that DLCPA outperforms several state-of-the-art exemplar-free baselines in both Task-IL and Class-IL settings.
\end{abstract}

\section{Introduction}

Incremental learning (IL) refers to the ability to continuously learn new knowledge from a series of tasks, which is crucial for many open-world applications such as autonomous robots. It has thus garnered significant attention in recent years \citep{survey1, survey2}. Typically, a model is expected to sequentially learn from a series of tasks, with samples from a completed task becoming inaccessible for future learning \citep{setting}. In such a context, an effective IL learner is expected to exhibit both high \textbf{plasticity} for new task learning and \textbf{stability} to retain old-task knowledge. However, few established IL methods can achieve a perfect balance between them, known as the plasticity-stability dilemma \citep{dilemma}.

Existing IL methods fall into three categories: memory-based, regularization-based, and parameter-isolation-based. Memory-based methods~\citep{der, clser} maintain an additional memory for storing old-task exemplars and use this stored information to recall old-task knowledge when learning a new task. While they exhibit high plasticity for new knowledge learning, old-task knowledge may be overwritten, especially when stored exemplars cannot support the old-task data distribution. Regularization-based methods~\citep{ewc, pass} penalize changes in essential neurons or activation through regularization terms, typically setting a hyper-parameter to balance plasticity and stability. However, balancing this weight in real scenarios is challenging due to the randomness and disorder of deep neural network training. Parameter-isolation-based methods~\citep{cpg, bmkp} allocate network parameters (or parameter basis) to specific tasks, preventing them from being updated in subsequent task learning. While they often maintain stability, plasticity gradually decreases as network capacity is consumed.

\begin{wrapfigure}{r}{0.55\textwidth}
\centering
    \includegraphics[width=0.55\columnwidth]{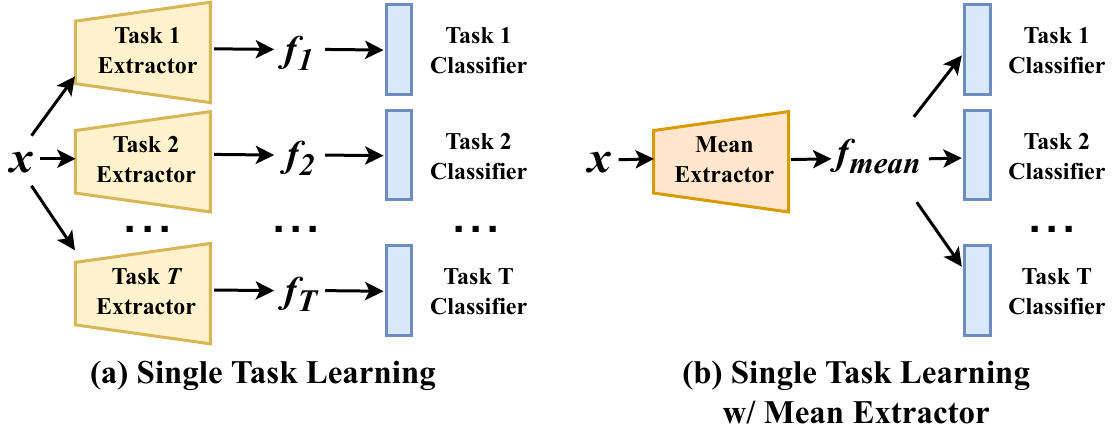} % Reduce the figure size so that it is slightly narrower than the column. Don't use precise values for figure width.This setup will avoid overfull boxes.
    \caption{
        Diagram of Single Task Learning (STL) and its variation.
        (a) STL represents an upper-bound IL model that trains a specific network for each task. (b) STL-me is an STL variation that averages all STL feature extractors in the parameter space. 
    }
     \label{fig:baseline}
\end{wrapfigure}

Instead of following the established techniques, this paper turns to investigate single task learning (STL), an upper-bound IL with both high plasticity and stability. As depicted in Figure~\ref{fig:baseline} (a), STL trains a specific model for each new task, leaving trained models unchanged. However, memory utilization increases linearly with the number of learning tasks, limiting the practicability of STL. To overcome this limitation, one intuitive solution is to consolidate the knowledge from all STL models into a single unified learner.
In the context of exemplar-free IL, it usually necessitates an intricate design to apply advanced distilling techniques during the learning process \citep{pass}. Instead, we explore a simple strategy, which involves averaging all STL feature extractors in the parameter space, as shown in Figure~\ref{fig:baseline} (b). This strategy showcases considerable potential in retaining knowledge across all tasks. Furthermore, with suitable finetuning of classifiers, it can approximate the upper-bound performance achieved by STL.

In light of the aforementioned observations, we propose a \textbf{D}ual-\textbf{L}earner framework with \textbf{C}umulative \textbf{P}arameter \textbf{A}veraging (DLCPA) for exemplar-free IL. DLCPA is characterized by a dual-learner design: a plastic learner for acquiring new task knowledge, and a stable learner for accumulating all previously learned knowledge. Additionally, several task-specific classifiers cooperate with the stable learner to achieve the final prediction. During each task training, these three modules are updated alternately. The plastic learner initially adapts to new task data using a self-supervised loss in conjunction with a supervised loss to enhance feature extraction robustness. Subsequently, the stable learner is updated in relation to the plastic learner using a cumulative average approach to integrate new task feature-extraction knowledge. Finally, the task-specific classifier is optimized to align with the features extracted by the stable learner. These three steps form a cohesive loop that is executed until convergence.

The contributions of this work are summarized as follows:
\begin{itemize}
  \item Our empirical findings suggest that averaging the STL models in the parameter space can potentially consolidate knowledge across all tasks. In conjunction with finetuned task-specific classifiers, the averaged model can approximate the performance of STL, an upper bound of exemplar-free IL.

  \item Inspired by the above observation, we propose the DLCPA framework for exemplar-free IL. DLCPA disentangles the dilemma between plasticity and stability via a dual-learner scheme. A cumulative parameter averaging strategy is introduced to facilitate a gentle knowledge transfer between the dual learners. Moreover, meticulously designed updating rules ensure the overall performance of the framework.  

  \item We evaluate DLCPA on CIFAR-100 and Tiny-ImageNet. The experimental results show that DLCPA outperforms several state-of-the-art baselines under both exemplar-free Task-IL and Class-IL settings.
\end{itemize}

\section{Related Work \label{related work}}

Incremental learning has been widely studied in recent years \citep{survey1, survey3}, and many methods have emerged. According to the techniques employed for preventing catastrophic forgetting \citep{forget1}, they can be categorized into three classes: memory-based methods, regularization-based methods, and parameter-isolation-based methods.

Most memory-based methods rely on exemplar memory \citep{hal, co2l, afc, lider}, replaying old-task exemplars with new data to retain old-task knowledge. Some memory-based methods \citep{dgr, di} use sample generation techniques for IL in the exemplar-free scenario, replaying models with generated pseudo old-task samples or features. Regularization-based methods, typically designed for exemplar-free Task-IL, use knowledge distillation to maintain network activation during learning \citep{lwf, podnet}, or penalize changes in essential neurons during IL \citep{si, ewc, mas}. Parameter-isolation-based methods allocate parameters for specific tasks. Some mainstream technology branches include the expansion-based methods \citep{pcl, pnn} assigning network branches for each task, the mask-based methods \citep{packnet, piggyback, cat} preserving crucial parameters for each task, and projection-based methods \citep{gpm, adam-nscl} constraining the gradient descent direction to prevent interference with model performance for old tasks.

This paper introduces DLCPA, a novel exemplar-free incremental learning framework with a dual-learner architecture. Similar architectures are found in exemplar-based IL methods like CLS-ER \citep{clser} and DualNet \citep{dualnet}, which use a plastic model for new knowledge and a stable model for long-term knowledge. However, these methods depend on old-task exemplars to maintain the stable model and prevent forgetting. In contrast, DLCPA employs a cumulative average update strategy for the stable learner, achieving a desirable balance between plasticity and stability in an exemplar-free setting.

\section{Analysis}

The goal of this section is to present our empirical findings that averaging model in the parameter space of STL could potentially be an effective strategy for preserving knowledge across all tasks. These findings serve as the motivation for our DLCPA, which will be introduced in Section \ref{sec:methods}.

\subsection{Problem Definition}
\label{ssec:probDef}

In IL, a model is tasked with learning knowledge from a sequence of $T$ task datasets: $D_1$, $D_2$, $\dots$, $D_T$. This paper specifically focuses on IL under the exemplar-free setting\citep{classilsurvey}, i.e., accessing or storing old-task samples is strictly forbidden during learning a new task. In particular, when training task $t$, only the corresponding dataset $D_t = \{(x_k^t, y_k^t)\}_{k=1}^{n_t}$ is accessible, where $(x_k^t,y_k^t)$ represents an input-output pair and $n^t$ denotes the number of samples in $D_t$. Furthermore, the categories in different tasks are disjoint, i.e., $Y_{t_1}\cap Y_{t_2}=\emptyset$ for $t_1\not=t_2$, with $Y_t$ representing the label set of task $t$.

The objective of IL is to encapsulate all task knowledge within a single model. During testing, we assess model performance using all $T$ task samples. Task identifiers are provided alongside query samples in Task-IL, but not in Class-IL. The DLCPA proposed in this paper is applicable to both Task-IL and Class-IL scenarios.

\subsection{Baseline: Single Task Learning}
\label{ssec:STL}

Single Task Learning (STL) is recognized as an upper-bound baseline for exemplar-free IL\citep{apd, gpm}. As depicted in Figure~\ref{fig:baseline} (a), STL trains an independent network for each task\footnote{This work explores a stationary version of STL that initializes the new-task model $\Theta_t$ with the preceding $\Theta_{t-1}$. Please refer to Algorithm~\ref{alg:stl} in the Appendix for detailed pseudo codes.}. 
Given a query sample $x$ from task $t$, STL employs the model corresponding to task $t$ to make a prediction, which can be formulated as:
\begin{equation}
    \begin{aligned}
        \hat{y}_{stl} =  \Gamma_t^{\top}f(x \mid \Theta_t),
    \end{aligned}
    \label{equation:stl} 
\end{equation}
where $f(\cdot)$ represents the feature extractor (the model excluding the last layer of the network) and $\Theta_t$ denotes its parameters. Additionally, $\Gamma_t$ signifies the parameter matrix of the classifier (the final layer of the model) corresponding to task $t$, with each column of $\Gamma_t$ storing a class prototype.

STL boasts high plasticity by introducing an independent model for each task, thereby facilitating unrestricted new knowledge learning. Concurrently, the old-task models of STL remain unchanged, ensuring high stability. 
However, the memory usage of STL escalates linearly with the number of learned tasks, restricting its practical application. To address this limitation, a straightforward idea is to consolidate the knowledge from all STL models into a unified learner, aiming to approximate the performance of STL. Under the exemplar-free IL setting, it usually necessitates an intricate design to apply advanced distilling techniques during the learning process \citep{pass}. Instead, we investigate a simple strategy that averages all feature extractors in the parameter space, which showcases significant potential in preserving the knowledge across all tasks, thereby approximating the performance of STL. We elaborate on our findings in the following subsection.

\begin{table}[h]
    \caption{
      Accuracy (\%) under Task-IL of STL and its variations on MNIST and CIFAR-100.
    }
    \label{tab:baseline}
    \centering
    \renewcommand\arraystretch{0.5}
    \setlength{\tabcolsep}{1mm}{
    \begin{tabular}{l|c|c}
      \toprule
      \textbf{Methods}                  & \textbf{5-split MNIST} & \textbf{10-split CIFAR-100}  \\
      \midrule              
      STL                               & 99.73$\pm$0.1     & 86.94$\pm$0.3  \\
      % STL-mf                            & 96.39$\pm$1.45	   & 72.18$\pm$1.3  \\
      % STL-mf w/ classifier retraining   & 97.93$\pm$0.96	   & 85.86$\pm$0.4  \\
      STL-me                            & 96.74$\pm$0.7	   & 59.71$\pm$1.0  \\
      STL-me w/ classifier finetuning   & 98.55$\pm$0.3	   & 80.65$\pm$0.2  \\
      \bottomrule
    \end{tabular}
    }
\end{table}

\subsection{STL Approximation with Mean Extractor} 
\label{ssec:STLME}

To address the storage consumption issue of STL, we explore a simple solution, STL-me, which averages all feature extractors into a unified one and makes predictions based on this mean feature extractor, as shown in Figure~\ref{fig:baseline} (b). Specifically, given a query sample $x$ from task $t$, STL-me makes a prediction as follows:
\begin{equation}
    \begin{aligned}
        \hat{y}_{stl\_me} = \Gamma_t^{\top}f\left(x\mid\dfrac{1}{T}\textstyle \sum_{i=1}^T \Theta_i\right).
    \end{aligned}
    \label{equation:stl_me} 
\end{equation}

To evaluate the performance of STL-me, we conduct experiments on two datasets: MNIST and CIFAR-100. As depicted in Table~\ref{tab:baseline}, the performance of this straightforward approach fails to meet our expectations. There exists an evident accuracy gap between STL and STL-me, particularly on CIFAR-100. A possible reason for this performance degradation could be that averaging extractor parameters disrupts the sample distribution of each category in the feature space, thereby undermining the effectiveness of class prototypes in each task-specific classifier. To address this issue, we finetune each task-specific classifier to adapt to the mean feature extractor using the data of the corresponding task. The prediction is then made based on the finetuned classifier parameters $\Gamma_t^\star$:
\begin{equation}
    \begin{aligned}
        \hat{y}_{stl\_me\_ft} = {\Gamma_t^\star}^{\top}f\left(x\mid\dfrac{1}{T}\textstyle \sum_{i=1}^T \Theta_i\right).
    \end{aligned}
    \label{equation:stl_me_ft} 
\end{equation}
 
The last row Table~\ref{tab:baseline} displays that STL-me with classifier finetuning achieves quite competitive performance\footnote{See Section~\ref{sec:appendix_analysis} in the Appendix for more detailed evidence.}: it only trails STL by 1\% and 6\%\footnote{Current IL methods are still far from STL \citep{gpm}, and a 6\% accuracy loss on CIFAR-100 in this situation is acceptable} in accuracy on MNIST \citep{mnist} and CIFAR-100 \citep{cifar10}, respectively. These results underscore the effectiveness of the mean feature extractor, which shows promise in consolidating knowledge across all tasks. Moreover, the average operation significantly reduces the storage consumption from $O(T)$ to $O(1)$. While finetuning task-specific classifiers based on the average feature extractor is an unlikely task in exemplar-free IL, it still shows significant potential to inspire a promising IL method. Motivated by these observations, we propose our DLCPA framework, which will be detailed in the next section.

\begin{figure*}[t]
  \centering
  \includegraphics[width=0.9\textwidth]{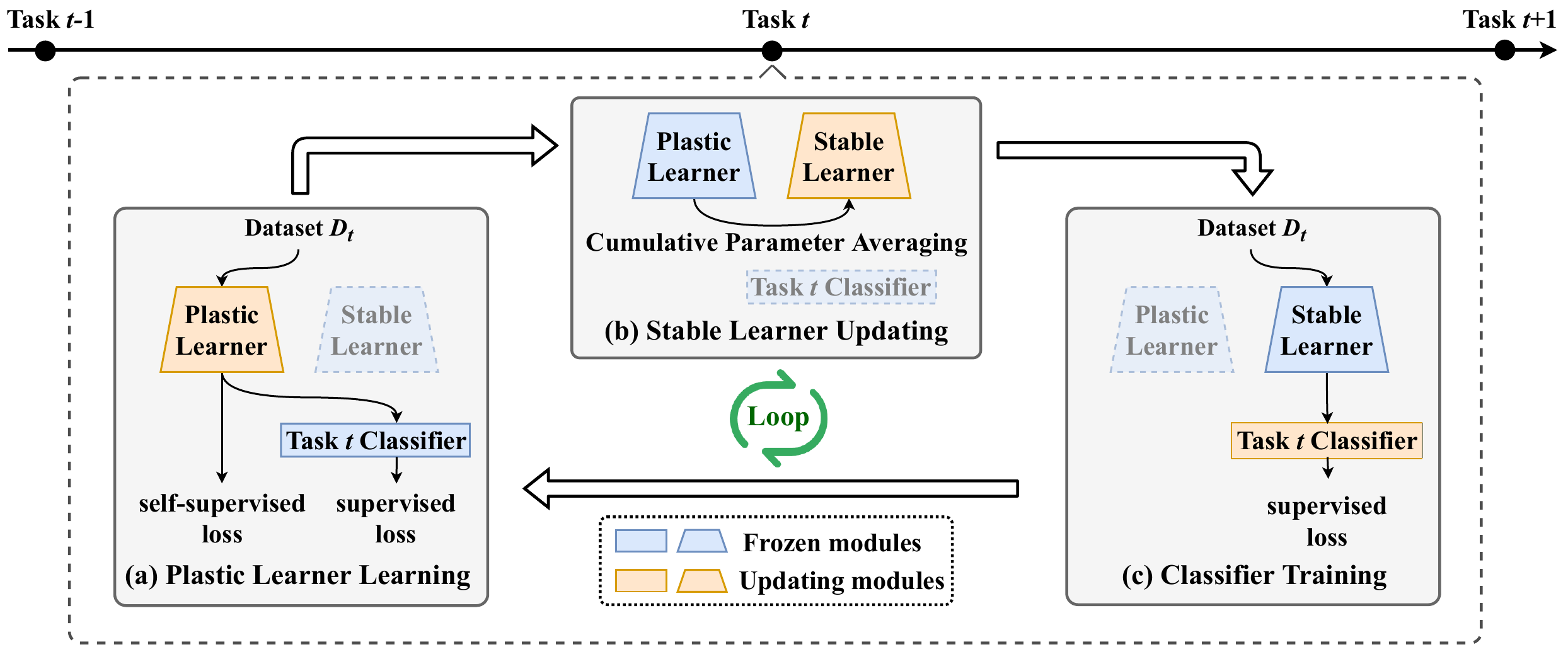}
  \caption{Diagram of task $t$ training processes of DLCPA.}
  \label{fig:process}
\end{figure*}

\section{Methods}
\label{sec:methods}

\subsection{Overview}

The proposed DLCPA framework is characterized by three main components: a plastic leaner, a stable learner, and $T$ task-specific classifiers.
% The plastic learner aims to rapidly acquire knowledge from a new task. It is implemented as a deep neural network with ReLU activation units. Given an input $x$, it yields a feature representation $f(x\mid \Theta)$, where $\Theta$ denotes the network parameters. The stable learner is responsible to accumulate and retain the knowledge learned by the plastic learner over time. It shares the same architecture as the plastic learner but maintains its own parameters, denoted as $\Phi$. Hence, the stable learner can also extract a feature $f(x\mid \Phi)$ for any given input $x$. The classifier is a linear layer to perform classification based on the features extracted by the plastic learner or the stable learner. Each task is associated with a specific classifier. We denote the parameter of the classifier for task $t$ as $\Gamma_t$.

\begin{itemize}
    \item \textbf{Plastic learner} aims to rapidly acquire knowledge from a new task. It is implemented as a deep neural network with ReLU activation units. Given an input $x$, it yields a feature representation $f(x\mid \Theta)$, where $\Theta$ denotes the network parameters. 
    
   \item \textbf{Stable learner} is responsible to accumulate and retain the knowledge learned by the plastic learner over time. It shares the same architecture as the plastic learner but maintains its own parameters, denoted as $\Phi$. Hence, the stable learner can also extract a feature $f(x\mid \Phi)$ for any given input $x$.
   
   \item \textbf{Classifier} is a linear layer to perform classification based on the features extracted by the plastic learner or the stable learner. Each task is associated with a specific classifier. We denote the parameter of the classifier for task $t$ as $\Gamma_t$.
\end{itemize}

As depicted in Figure~\ref{fig:process},  the training of the DLCPA framework for task $t$ involves a three-stage loop: plastic learner learning, stable learner updating, and classifier training. In the plastic learner learning stage, we employ self-supervised techniques to guide the model in learning robust parameters. In the stable learner updating stage, an updating strategy with cumulative parameter averaging is designed to transfer the acquired knowledge from the plastic learner to the stable learner. In the classifier training stage, the task-specific classifier is trained to fit the supervision knowledge based on the features extracted by the stable learner. These three stages are iteratively performed several times.

During the inference phase, given a query sample $x$, DLCPA first utilizes the stable learner to extract its feature, which is then fed into the classifier to yield the prediction. For task-IL, we select the classifier corresponding to the task label (for instance, $t$) of a query:
\begin{equation}
  \hat{y} = \Gamma_t^{\top} f(x\mid \Phi),
  \label{equation:predict_taskil} 
\end{equation}
where $\hat{y}$ represents the model prediction. 
For Class-IL, we concatenate the parameters of all task classifiers into a unified one, denoted as $\Gamma = [\Gamma_1,\dots,\Gamma_T]$, to make a prediction:
\begin{equation}
  \hat{y} = \Gamma^{\top} f(x\mid \Phi).
  \label{equation:predict_class} 
\end{equation}

Next, we elaborate on the three learning stages in the following three subsections.

\subsection{Plastic Learner Learning\label{online}} 

During the plastic learner learning stage, as shown in Figure~\ref{fig:process} (a), both the stable learner and the classifier are frozen, and only the plastic learner is trained to learn new-task representation. 
Particularly, we employ self-supervised learning techniques to guide the plastic learner's fitting process. This approach is motivated by two key factors. 
On the one hand, previous studies \citep{spb, ssillearner} have shown that self-supervised technique can implicitly guide a neural model to learn task-independent knowledge for incremental learning. It, therefore, can encourage the plastic learner to learn unbiased parameters that are more suitable for the cumulative parameter averaging step\footnote{See Figure~\ref{fig:landscape} in the Appendix for illustration.}; On the other hand, self-supervised learning tends to discover transferable knowledge \citep{sslsurvey}, which boosts DLCPA to forward even backward transfer knowledge. In addition, a supervised loss is utilized to enable the model to learn discriminative representations for new tasks.

Technically, we wrap the plastic learner with a self-supervised module and apply the self-supervised loss along with a supervised loss to optimize the plastic learner parameter $\Theta_t$ for task $t$:
\begin{equation}
  \min_{\Theta_t}  \mathop{\mathlarger {\mathbb{E}}}_{(x,y) \sim D_t}  L_{ssl}(f(x\mid\Theta_t)) + \lambda L_{sl}(\Gamma_t^{\top}f(x\mid \Theta_t), y),
\label{equation:online_loss} 
\end{equation}
where $L_{ssl}(.)$ and $L_{sl}(.,.)$ denote a self-supervised loss and the cross-entropy supervised loss, respectively. $\lambda$ is a hyper-parameter that balances the two losses. The choice of self-supervised learners is not restricted, and we provide results with BYOL \citep{byol}, SimClr \citep{simclr}, and MoCoV2 \citep{moco} in the experiments.

\subsection{Stable Learner Updating\label{target}}

Upon updating, the plastic learner acquires new-task knowledge, enhancing its ability to extract informative features for the new task. On the other hand, the stable learner prioritizes generalizable knowledge across different tasks. To facilitate a gentle transfer of knowledge from the plastic learner to the stable learner without compromising its stability, we introduce a cumulative average updating strategy, which is detailed below.

As observed in Section~\ref{ssec:STLME}, the mean feature extractor can potentially encapsulate knowledge across all tasks. This observation motivates us to construct a stable learner by averaging the parameters of the plastic learner across all tasks:
 \begin{equation}
    \begin{aligned}
        \Phi_t = \frac{ \sum_{i=1}^t \Theta_i}{t} ,
    \end{aligned}
    \label{equation:mean_1} 
\end{equation}
where $\Phi_t$ denotes the stable learner parameters after learning task $t$.

However,  \eqref{equation:mean_1} necessitates storing the plastic learner parameters for all tasks, leading to memory usage that grows linearly with the number of learned tasks. To address this, we propose a cumulative average updating strategy for the stable learner parameters, achieving the same goal as Eq.~\eqref{equation:mean_1} but with fixed memory usage. Specifically, before learning task $t$, we store a copy of the stable learner, whose parameters can be viewed as the average of the plastic learner parameters for all previous tasks, i.e., $\Phi_{t-1} = \frac{ \sum_{i=1}^{t-1} \Theta_i}{t-1}$. Consequently, we can achieve  \eqref{equation:mean_1} as follows:
\begin{equation}
    \begin{aligned}
        \Phi_t = \frac{ \Theta_t + (t-1) * \Phi_{t-1}}{t}.
    \end{aligned}
    \label{equation:target} 
\end{equation}
Equation~\ref{equation:target} only requires storing the old-task extractor $\Phi_{t-1}$, making the updating memory-efficient. 

\subsection{Classifier Training\label{classifier}}

Next, as depicted in Figure~\ref{fig:process} (c), DLCPA updates the classifier based on the features extracted by the stable learner, aiming to align the classifier with the stable learner. We halt the gradient backpropagation beyond the classifier to prevent the stable learner from forgetting. The loss function for training the classifier is defined as:
\begin{equation}
    \begin{aligned}
     \min_{\Gamma_t} \mathop{\mathlarger {\mathbb{E}}}_{(x,y) \sim D_t} L_{sl}(\Gamma_t^{\top}f(x\mid \Phi_t), y).
    \end{aligned}
    \label{equation:classifier_loss} 
\end{equation}

Upon completion of classifier training, one iteration for task $t$ finishes. Notably, within each task, the three-step iteration \textbf{repeats} multiple times until the model converges. We outline the training process of DLCPA in Algorithm~\ref{alg:training} in the Appendix.

\subsection{Further Discussion}

In each iteration, although the plastic learner and the classifier are updated separately, they actually interact through the stable learner. On the one hand, the classifier is trained to align with the new-task feature representations yielded by the stable learner, which has encompassed the knowledge acquired by the plastic learner through accumulative parameter averaging. On the other hand, the training of the plastic learner is guided by the class prototypes within the classifier. These prototypes encapsulate the task-general information learned by the stable learner, thereby encouraging the plastic learner to capture more generalizable patterns. DLCPA benefits from this indirect communication, which is empirically substantiated by ablation studies presented in Table~\ref{tab:ablation}.

\section{Experiments \label{experiment}}

\subsection{Settings}

\textbf{Datasets.} We select the following two widely used datasets: CIFAR-100 \citep{cifar10} and Tiny-ImageNet \citep{tinyimagenet}. CIFAR-100 has 100 classes containing 600 images each. The Tiny-ImageNet dataset contains 200 classes, and 600 images for each class. In our experiments, they are \textbf{equally} separated into 10, 20, and 25 incremental branches.

\textbf{Evaluation metrics.} Following \citet{der}, we take the classification accuracy (ACC \%) after learning all tasks as the primary metric for the performance evaluation. Besides, we also report backward knowledge transfer (BWT \%) \citep{gem}, which formulated as:
% calculated as follows:
\begin{equation}
    \begin{aligned}
        BWT = \frac{1}{T-1} \sum_{i=1}^{T-1} R_{T,i}-R_{i,i},
    \end{aligned}
    \label{equation:BWT} 
\end{equation}
where $R_{i,j}$ denotes the test classification accuracy of the model on task $j$ after learning task $i$.
Furthermore, to alleviate the influence of randomness in neural network training, we run all experiments five times with random seeds and report the average performance.  

\textbf{Baselines:} We compare our method with various latest and classic incremental learning methods, including Episodic Memory (GEM) \citep{gem}, Dark Experience Replay (DER and DER++) \citep{der}, Learning without Forgetting (LwF) \citep{lwf}, Elastic Weight Consolidation (EWC) \citep{ewc}, PackNet \citep{packnet}, Orthogonal Weights Modification (OWM) \citep{OWM}, MUC-MAS \citep{muc-mas}, PASS \citep{pass}, Adam-NSCL \citep{adam-nscl}, Gradient Projection Memory (GPM) \citep{gpm}, Bit-Level Information Preserving (BLIP) \citep{blip}, ADNS \citep{adns}, VDFD \citep{vdfd}, ILCOC \citep{ilcoc}, Always Be Dreaming (ABD) \citep{abd}, Filter Atom Swapping (FAS) \citep{fas}, and DCPOC \citep{dcpoc}.
% Incremental Moment Matching (IMM) \citep{IMM}, , Memory Aware Synapse (MAS) \citep{mas} Synaptic Intelligence (SI) \citep{si},

\textbf{Implementation details.} We employ the ResNet18 \citep{resnet} as the backbone architecture. All hyperparameters are searched through a validation set which is constructed by sampling ten percent samples from the training set. For DLCPA, the learning rate is set to 0.005 and 0.02, and the batch size is set to 32 and 512 for CIFAR-100 and Tiny-ImageNet, respectively. The network parameters are optimized by SGD for 100 epochs per task. The balance weight $\lambda$ is set to 10 for both datasets. More details can be found in our supplementary code. Besides, we provide the performance of DLCPA with several self-supervised learners, including BYOL \citep{byol}, SimClr \citep{simclr}, and MoCoV2 \citep{moco}. Notably, we refresh the negative feature queue of MoCoV2 at the beginning of each task to satisfy the exemplar-free setting.

\subsection{Performance comparison}

We first follow \citet{vdfd} to compare DLCPA with several baselines under the Task-IL setting. Table~\ref{tab:compare_taskil} reports the classification accuracy and BWT on 10-split CIFAR-100, 20-split CIFAR-100, and 25-split Tiny-ImageNet. 
On CIFAR-100, DLCPA equipped with BYOL achieves the second-best performance in the 10-split setting, trailing VDFD only by 0.15\%. Additionally, DLCPA performs the best in the 20-split CIFAR-100, with the BYOL version surpassing the second-best VDFD by 0.84\%. 
In the case of 25-split Tiny-ImageNet, DLCPA outperforms all existing methods, exceeding the second-best GPM by 9.27\%. 
Moreover, DLCPA shows positive BWT in 25-split Tiny-ImageNet because the introduced self-supervised techniques encourage DLCPA to learn task-independent knowledge, which benefits DLCPA's old-task performance.

\begin{table*}[t]
  \caption{
    Task incremental learning results on 10-split CIFAR-100, 20-split CIFAR-100, and 25-split Tiny-ImageNet. (*) indicates the upper-bound model that trains a specific model for each task. (-) means that the result was unavailable, due to the intractable training time by our implementation (GEM). \textbf{Bold} and \underline{underlined} denote the best and the second-best performance.
  }
  \label{tab:compare_taskil}
  \centering
  \renewcommand\arraystretch{0.5}
  \setlength{\tabcolsep}{0.5mm}{
  \begin{tabular}{l|c|cc|cc|cc}
    \toprule
    \multirow{2}{*}{\textbf{Methods}} & \multirow{2}{*}{\textbf{Buffer}}  & \multicolumn{2}{c|}{\textbf{10-split CIFAR-100}}   & \multicolumn{2}{c|}{\textbf{20-split CIFAR-100}}    & \multicolumn{2}{c}{\textbf{25-split Tiny-ImageNet}}     \\
    & & ACC (\%) & BWT (\%) & ACC (\%) & BWT (\%) & ACC (\%) & BWT (\%) \\
    \midrule                 
    STL*                           & -     & 86.94          & 0.00          & 87.79              & 0.00             & 80.30           & 0.00        \\
    \midrule                 
    % GEM\citep{gem}                 & 1000  & 49.48          & \textbf{2.77} & 68.89              & -1.20            & -               & -           \\
    % iCaRL\citep{icarl}             & 1000  & 76.43          & -4.87         & 75.75              & -6.08            & 56.35           & -12.37      \\
    % A-GEM\citep{a-gem}             & 1000  & 52.73          & -1.30         & 68.92              & -0.69            & 53.10           & -5.43       \\
    % MEGA\citep{mega}               & 1000  & 54.17          & -2.19         & 64.98              & -5.13            & 57.12           & -5.90       \\
    % GD\citep{GD-WILD}              & 1000  & 66.90          & -21.34        & 78.16              & -14.39           & 47.88           & -30.36      \\
    GEM                             & 500   & 61.59          & -26.40        & 71.25              & -9.22            & -               & -           \\
    DER                             & 500   & 73.26          & -13.69        & 77.37              & -5.16            & 57.12           & -28.49      \\
    DER++                           & 500   & 74.86          & -12.56        & 77.82              & -3.57            & 55.09           & -28.98       \\
    \midrule        
    LWF                             & -     & 70.70          & -6.27         & 74.38              & -9.11            & 56.57           & -11.19      \\
    % SI                              & -     & 60.57          & -5.17         & 59.76              & -8.65            & 45.27           & -4.45       \\
    EWC                             & -     & 71.28          & -2.97         & 70.90              & -3.03            & 52.33           & -6.17       \\
    % IMM                             & -     & 62.67          & -9.32         & 57.94              & -9.41            & 46.81           & -9.91       \\
    PackNet                         & -     & 77.18          & \textbf{0.00} & 67.50              & 0.00             & 52.93           & 0.00        \\
    % MAS                             & -     & 66.71          & -4.61         & 63.63              & -6.36            & 47.96           & -7.04       \\
    OWM                             & -     & 68.89          & -1.88         & 68.47              & -3.37            & 49.98           & -3.64       \\
    % InstAParam\citep{InstAParam}   & -     & 47.84          & -11.92        & 51.04              & -4.92            & 34.64           & -10.05      \\
    MUC-MAS                         & -     & 63.73          & -3.88         & 67.22              & -5.72            & 41.18           & -4.04       \\
    PASS                            & -     & 71.23          & -5.21         & 74.43              & -4.03            & 56.78           & -5.33       \\
    Adam-NSCL                       & -     & 73.77          & -1.60         & 75.95              & -3.66            & 58.28           & -6.05       \\
    GPM                             & -     & 70.93          & -3.52         & 77.55              & -1.20            & 69.48           & -5.29       \\
    BLIP                            & -     & 61.09          & -0.70         & 68.17              & -4.21            & 47.63           & -5.79       \\
    ADNS                            & -     & 77.21          & -2.32         & 77.33              & -3.25            & 59.77           & -4.58       \\
    VDFD                            & -     & \textbf{83.30} & -1.27         & 85.84              & -1.53            & 65.99           & -2.35       \\
    % BMKP\citep{bmkp}              & -     & 79.62          & 0.00          & 85.81              & 0.00             & 80.01           & 0.00        \\
    \midrule                                                    
    DLCPA + BYOL                    & -     & \underline{83.15}& \underline{-0.04}& \textbf{86.68}& \underline{0.27} & 76.90           & 0.56       \\
    % DLCPA + VICReg (ours)         & -     & 82.53          & -0.63       & 85.94              & 0.23             & 74.28           & -4.08       \\
    DLCPA + SimClr                  & -     & 82.03          & -0.99         & 85.95              & -0.08            & \underline{77.71}& \textbf{3.94}        \\
    DLCPA + MoCoV2                  & -     & 81.15          & -0.34         & \underline{85.98}  & \textbf{0.39}    & \textbf{78.75}  & \underline{3.91}        \\
    \bottomrule
  \end{tabular}
  }
\end{table*}

\begin{table*}[t]
    \caption{
      Class incremental learning results (ACC \%) on 10-split CIFAR-100 and 10-split Tiny-ImageNet. 
      % We report the average accuracy (\%) and the corresponding standard deviation over five runs with random seeds, and the higher is the better. 
      (*) indicates the upper-bound model that is jointly trained with all tasks. ($\dag$) imply the results are quoted from VDFD, in which the standard deviations are not provided. 
    }
    \label{tab:compare_classil}
    \centering
  \renewcommand\arraystretch{0.5}
    \setlength{\tabcolsep}{1.5mm}{
    \begin{tabular}{l|c|c|c}
      \toprule
      \textbf{Methods}  & \textbf{Buffer}    & \textbf{10-split CIFAR-100} & \textbf{10-split Tiny-ImageNet}    \\

      \midrule              
      Joint*                     & -    & 70.31                 & 58.07              \\
      \midrule                          
      GEM                        & 500  & 11.76$\pm$0.7         & -               \\
      DER                        & 500  & 34.24$\pm$1.4         & 17.75$\pm$1.1   \\
      % DER \citep{der}             & 1000 & 41.86$\pm$1.3         & 23.52$\pm$0.6  \\
      DER++                      & 500  & 36.52$\pm$2.0         & 19.38$\pm$1.4   \\
      % Co2L$^\dag$ \citep{co2l}    & 500  & 30.45                 & 20.12           \\
      % HCR$^\dag$ \citep{hcr}      & 500  & 33.71                 & 22.05           \\
      \midrule                          
      % LwF \citep{lwf}             & -    & 10.05$\pm$0.6         & 9.36$\pm$0.1           \\
      % SI \citep{si}               & -    & 7.42$\pm$0.5          & 6.77$\pm$0.1    \\
      OWM                        & -    & 27.63$\pm$0.5         & 15.30$\pm$0.3   \\
      % DI  \citep{di}              & -    & 11.19$\pm$0.3         & 8.19$\pm$0.2    \\
      ILCOC                      & -    & 22.19$\pm$1.6         & 15.78$\pm$0.4   \\
      PASS                       & -    & 31.80$\pm$0.7         & 28.48$\pm$0.6    \\
      ABD                        & -    & 33.30$\pm$0.3         & 15.80$\pm$0.4    \\      
      FAS                        & -    & 25.79$\pm$0.7         & 24.29$\pm$0.3    \\
      % WSN \citep{wsn}             & -    & 8.79$\pm$0.2          & 7.68$\pm$0.3     \\
      DCPOC                      & -    & 25.80$\pm$0.4         & 19.75$\pm$0.1    \\
      VDFD$^\dag$                & -    & 38.38                 & 26.21            \\
      \midrule                                                    
      DLCPA + BYOL (ours)        & -    &\underline{40.67$\pm$1.1} & 23.19$\pm$0.6           \\
      % DLCPA + VICReg (ours)      & -    & \textbf{40.88$\pm$0.2}& 22.28$\pm$0.1           \\
      DLCPA + SimClr (ours)      & -    & \textbf{40.82$\pm$0.4}& \underline{26.68$\pm$0.3}           \\
      DLCPA + MoCoV2 (ours)      & -    & 40.11$\pm$0.8         & \textbf{29.21$\pm$0.3}          \\
      \bottomrule
    \end{tabular}
    }
\end{table*}

We next follow \citet{der} to evaluate the Class-IL performance of DLCPA, and report the experimental results on 10-split CIFAR-100 and 10-split Tiny-ImageNet in Table~\ref{tab:compare_classil}. 
Our findings show that the Class-IL performance of DLCPA is highly competitive. Compared with the second-best performance, DLCPA is 2.44\% higher on CIFAR-100 and 0.73\% higher on Tiny-ImageNet.
The competitive Class-IL performance of DLCPA is attributed to the task-wise balanced feature extractor, which is the average of each task's optimal extractor and does not suffer the notorious bias problem in Class-IL \citep{bic}. We empirically prove that DLCPA is task-wise balanced via the confusion matrix in Section~\ref{sec:confusion_matrix_comparision} of the Appendix.

\begin{table*}[h]
    \caption{
      Ablation experiment results (ACC \%) of DLCPA on CIFAR-100 and Tiny-ImageNet.
    }
    \label{tab:ablation}
    \centering
  \renewcommand\arraystretch{0.5}
    \setlength{\tabcolsep}{0.4mm}{
    \begin{tabular}{l|c|c|c|c}
      \toprule
      \multirow{2}{*}{\textbf{Methods}}   & \multicolumn{2}{c|}{\textbf{10-split CIFAR-100}} & \multicolumn{2}{c}{\textbf{10-split Tiny-ImageNet}}  \\
       & Class-IL & Task-IL & Class-IL & Task-IL \\
      \midrule               
      (a) Plastic learner learning w/o SSL                          & 27.93$\pm$1.5     & 80.90$\pm$1.0   & 23.38$\pm$0.7 & 63.96$\pm$0.3  \\
      (b) Plastic learner learning w/ additional classifier         & 35.53$\pm$0.6     & 78.35$\pm$0.5   & 25.90$\pm$0.4 & 62.64$\pm$0.4  \\
      (c) Stable learner updating w/ EMA                            & 30.00$\pm$1.8     & 74.87$\pm$0.6   & 24.25$\pm$0.3 & 61.81$\pm$0.2  \\
      (d) Stable learner updating w/ directly copying               & 22.98$\pm$1.3     & 72.56$\pm$1.3   & 20.92$\pm$0.3 & 58.59$\pm$0.5  \\
      (e) Classifier training w/ plastic-learner feature            & 13.67$\pm$1.3     & 64.73$\pm$2.3   & 22.23$\pm$0.2 & 60.34$\pm$0.2  \\
      \midrule              
      DLCPA + MoCoV2 (ours) & \textbf{40.11$\pm$0.8} & \textbf{81.15$\pm$0.5} & \textbf{29.21$\pm$0.3} & \textbf{65.90$\pm$0.4}    \\
      \bottomrule
    \end{tabular}
    }
\end{table*}

\subsection{Ablation Study}

\textbf{Effectiveness of plastic learner learning stage.} We first construct two variations of DLCPA: (a) removes the self-supervised learner (SSL) on the basis of DLCPA; (b) trains the plastic learner based on a randomly initialized temporary classifier. As illustrated in the first row of table~\ref{tab:ablation}, the performance degradation of (a) compared with DLCPA proves the necessity of a self-supervised technique for DLCPA. Besides, the second row of Table~\ref{tab:ablation} shows (b) performs lower performance than DLCPA on all settings. The reason for this phenomenon is that the classifier of DLCPA implicitly learns the feature pattern of the stable learner. Therefore the classifier is able to guide the plastic feature learner to extract such features, thus transferring knowledge from the former tasks to the latter and alleviating forgetting. However, (b) lacks this ability and suffers performance degradation.

\textbf{Effectiveness of stable learner updating stage.} We next construct two variations by substituting different stable-learner parameters updating strategies, including (c) exponential moving average (EMA, the update factor is set to the conventional 0.999 \citep{moco}) with plastic-learner parameters and (d) directly copying the parameters of the plastic learner. As shown in Table~\ref{tab:ablation}, DLCPA outperforms both (c) and (d), indicating that the proposed cumulative average update is more suitable for incremental learning.

\textbf{Effectiveness of classifier training stage.} The last variation (e) is constructed by training the classifier with the features extracted by the plastic learner. And the fifth row of Table~\ref{tab:ablation} shows a poor performance of (e), because of the mismatching between the classifier and the stable learner.

\begin{wrapfigure}{r}{0.5\textwidth}
\centering
    \includegraphics[width=0.5\columnwidth]{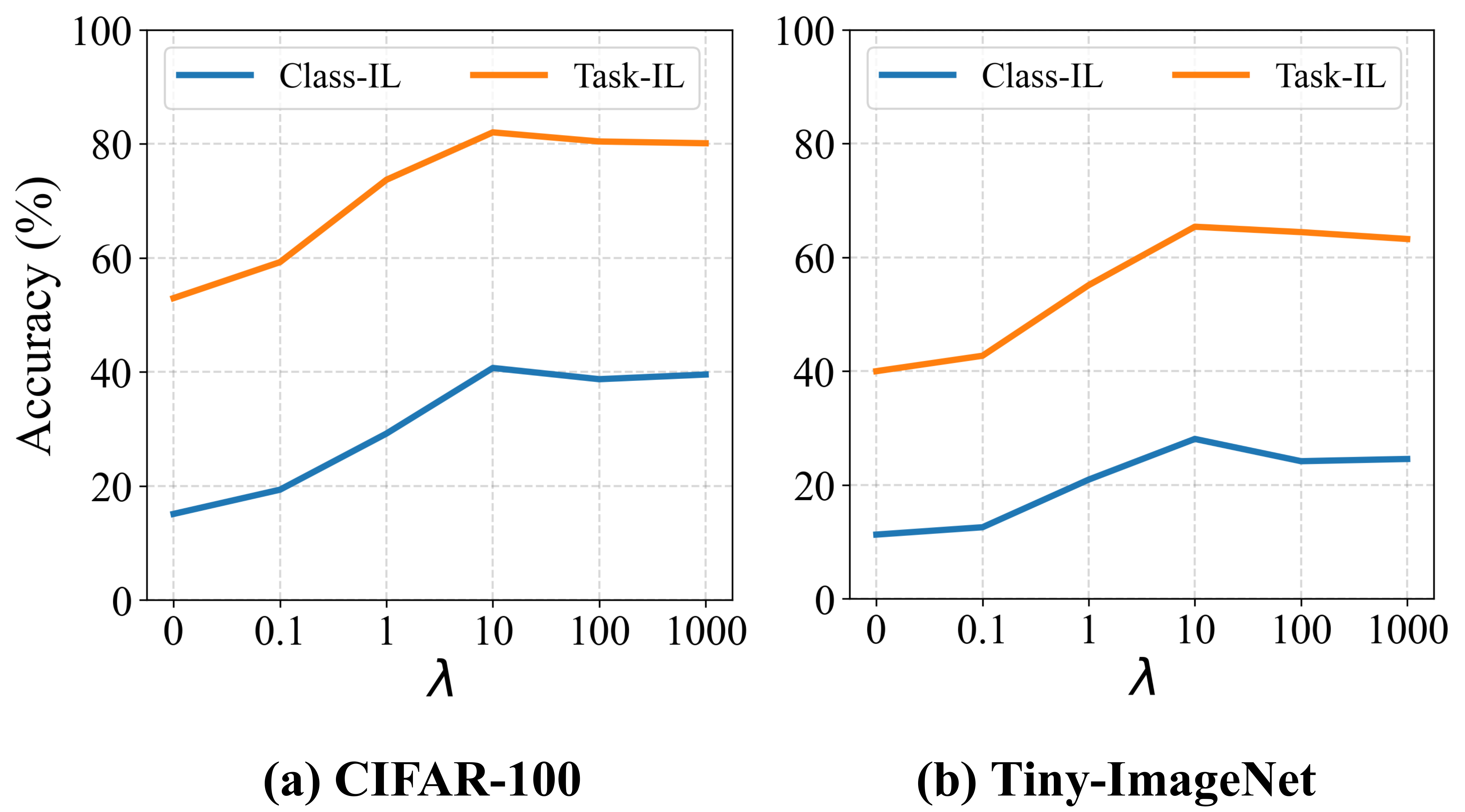} 
    \caption{
        Incremental learning results (ACC \%) of DLCPA with MoCoV2 under various $\lambda$ (weighting the supervised loss, see Eq.~\ref{equation:online_loss}) on CIFAR-100 (a) and Tiny-ImageNet (b).
    }
    \label{fig:sensitive}
\end{wrapfigure}

\subsection{Sensitive Analysis of $\lambda$}

This section analyzes the sensitivity of the balance parameter $\lambda$ (weighting the supervised loss) in \eqref{equation:online_loss}.
We select six different values of $\lambda$ and rerun the experiments on 10-split CIFAR-100 and 10-split Tiny-ImageNet, including 0, 0.1, 1, 10, 100, and 1000.
Figure~\ref{fig:sensitive}illustrates the results, indicating how the accuracy of DLCPA changes as $\lambda$ increases. Interestingly, we observed a similar trend on both datasets.
When $\lambda=0$, DLCPA performs worse under both Task-IL and Class-IL settings, as the supervised loss does not contribute to the plastic learner training, and DLCPA has poor discriminability for each task. 
As $\lambda$ increases, the accuracy grows and reaches the peak when $\lambda=10$.
% However, with $\lambda$ increasing continually, the performance slightly drops.
In conclusion, the performance DLCPA is sensitive to a small value of $\lambda$, but remains stable for larger weights like 10 for both CIFAR-100 and Tiny-ImageNet.

\section{Conclusion}
Through the exploratory experiments on STL, we discover that the knowledge of distinct extractors can be integrated by parameter averaging. This insight leads to the proposal of A Dual-Learner framework with Cumulative Parameter Averaging (DLCPA) for exemplar-free IL. DLCPA employs a dual-learner design, with a plastic learner for new-task representations and a stable learner for accumulating all knowledge learned by the plastic learner. Additionally, task-specific classifiers are alternately updated to align with the stable learner. Experiments on CIFAR-100 and Tiny-ImageNet demonstrate that DLCPA achieves state-of-the-art performance on both exemplar-free IL.

Nevertheless, DLCPA has certain limitations, such as the reliance on clear task boundaries for new knowledge organization and the assumption of task equivalence. Future work will delve into knowledge organization techniques and broaden the application scenarios of DLCPA.

\bibliography{iclr2024_conference}

\begin{thebibliography}{56}
\providecommand{\natexlab}[1]{#1}
\providecommand{\url}[1]{\texttt{#1}}
\expandafter\ifx\csname urlstyle\endcsname\relax
  \providecommand{\doi}[1]{doi: #1}\else
  \providecommand{\doi}{doi: \begingroup \urlstyle{rm}\Url}\fi

\bibitem[Aljundi et~al.(2018)Aljundi, Babiloni, Elhoseiny, Rohrbach, and
  Tuytelaars]{mas}
Rahaf Aljundi, Francesca Babiloni, Mohamed Elhoseiny, Marcus Rohrbach, and
  Tinne Tuytelaars.
\newblock Memory aware synapses: Learning what (not) to forget.
\newblock In \emph{European Conference on Computer Vision}, pp.\  139--154,
  2018.

\bibitem[Arani et~al.(2022)Arani, Sarfraz, and Zonooz]{clser}
Elahe Arani, Fahad Sarfraz, and Bahram Zonooz.
\newblock Learning fast, learning slow: a general continual learning method
  based on complementary learning system.
\newblock In \emph{International Conference on Learning Representations}, 2022.
\newblock URL \url{https://openreview.net/forum?id=uxxFrDwrE7Y}.

\bibitem[Bonicelli et~al.(2022)Bonicelli, Boschini, Porrello, Spampinato, and
  Calderara]{lider}
Lorenzo Bonicelli, Matteo Boschini, Angelo Porrello, Concetto Spampinato, and
  Simone Calderara.
\newblock On the effectiveness of lipschitz-driven rehearsal in continual
  learning.
\newblock In \emph{Advances in Neural Information Processing Systems}, 2022.

\bibitem[Buzzega et~al.(2020)Buzzega, Boschini, Porrello, Abati, and
  CALDERARA]{der}
Pietro Buzzega, Matteo Boschini, Angelo Porrello, Davide Abati, and SIMONE
  CALDERARA.
\newblock Dark experience for general continual learning: A strong, simple
  baseline.
\newblock In \emph{Advances in Neural Information Processing Systems},
  volume~33, pp.\  15920--15930, 2020.

\bibitem[Cha et~al.(2021)Cha, Lee, and Shin]{co2l}
Hyuntak Cha, Jaeho Lee, and Jinwoo Shin.
\newblock Co2l: Contrastive continual learning.
\newblock In \emph{IEEE International Conference on Computer Vision}, pp.\
  9516--9525, October 2021.

\bibitem[Chaudhry et~al.(2021)Chaudhry, Gordo, Dokania, Torr, and
  Lopez-Paz]{hal}
Arslan Chaudhry, Albert Gordo, Puneet~Kumar Dokania, Philip Torr, and David
  Lopez-Paz.
\newblock Using hindsight to anchor past knowledge in continual learning.
\newblock In \emph{AAAI Conference on Artificial Intelligence}, 2021.

\bibitem[Chen et~al.(2020)Chen, Kornblith, Norouzi, and Hinton]{simclr}
Ting Chen, Simon Kornblith, Mohammad Norouzi, and Geoffrey Hinton.
\newblock A simple framework for contrastive learning of visual
  representations.
\newblock In \emph{International Conference on Machine Learning}, volume 119,
  pp.\  1597--1607, July 2020.

\bibitem[{Delange} et~al.(2021){Delange}, {Aljundi}, {Masana}, {Parisot},
  {Jia}, {Leonardis}, {Slabaugh}, and {Tuytelaars}]{survey2}
M.~{Delange}, R.~{Aljundi}, M.~{Masana}, S.~{Parisot}, X.~{Jia},
  A.~{Leonardis}, G.~{Slabaugh}, and T.~{Tuytelaars}.
\newblock A continual learning survey: Defying forgetting in classification
  tasks.
\newblock \emph{IEEE Transactions on Pattern Analysis and Machine
  Intelligence}, 2021.
\newblock \doi{10.1109/TPAMI.2021.3057446}.

\bibitem[Douillard et~al.(2020)Douillard, Cord, Ollion, Robert, and
  Valle]{podnet}
Arthur Douillard, Matthieu Cord, Charles Ollion, Thomas Robert, and Eduardo
  Valle.
\newblock Podnet: Pooled outputs distillation for small-tasks incremental
  learning.
\newblock In \emph{European Conference on Computer Vision}, pp.\  86--102,
  2020.

\bibitem[Fini et~al.(2022)Fini, da~Costa, Alameda-Pineda, Ricci, Alahari, and
  Mairal]{ssillearner}
Enrico Fini, Victor G.~Turrisi da~Costa, Xavier Alameda-Pineda, Elisa Ricci,
  Karteek Alahari, and Julien Mairal.
\newblock Self-supervised models are continual learners.
\newblock In \emph{IEEE Conference on Computer Vision and Pattern Recognition},
  pp.\  9621--9630, June 2022.

\bibitem[Grill et~al.(2020)Grill, Strub, Altch\'{e}, Tallec, Richemond,
  Buchatskaya, Doersch, Avila~Pires, Guo, Gheshlaghi~Azar, Piot, kavukcuoglu,
  Munos, and Valko]{byol}
Jean-Bastien Grill, Florian Strub, Florent Altch\'{e}, Corentin Tallec, Pierre
  Richemond, Elena Buchatskaya, Carl Doersch, Bernardo Avila~Pires, Zhaohan
  Guo, Mohammad Gheshlaghi~Azar, Bilal Piot, koray kavukcuoglu, Remi Munos, and
  Michal Valko.
\newblock Bootstrap your own latent - a new approach to self-supervised
  learning.
\newblock In \emph{Advances in Neural Information Processing Systems},
  volume~33, pp.\  21271--21284, 2020.

\bibitem[{He} et~al.(2016){He}, {Zhang}, {Ren}, and {Sun}]{resnet}
K.~{He}, X.~{Zhang}, S.~{Ren}, and J.~{Sun}.
\newblock Deep residual learning for image recognition.
\newblock In \emph{IEEE Conference on Computer Vision and Pattern Recognition},
  pp.\  770--778, 2016.

\bibitem[He et~al.(2020)He, Fan, Wu, Xie, and Girshick]{moco}
Kaiming He, Haoqi Fan, Yuxin Wu, Saining Xie, and Ross Girshick.
\newblock Momentum contrast for unsupervised visual representation learning.
\newblock In \emph{IEEE Conference on Computer Vision and Pattern Recognition},
  June 2020.

\bibitem[Hu et~al.(2021)Hu, Qin, Wang, Ma, and Liu]{pcl}
Wenpeng Hu, Qi~Qin, Mengyu Wang, Jinwen Ma, and Bing Liu.
\newblock Continual learning by using information of each class holistically.
\newblock In \emph{AAAI Conference on Artificial Intelligence}, 2021.

\bibitem[Hung et~al.(2019)Hung, Tu, Wu, Chen, Chan, and Chen]{cpg}
Ching-Yi Hung, Cheng-Hao Tu, Cheng-En Wu, Chien-Hung Chen, Yi-Ming Chan, and
  Chu-Song Chen.
\newblock Compacting, picking and growing for unforgetting continual learning.
\newblock In \emph{Advances in Neural Information Processing Systems},
  volume~32, pp.\  13647--13657, 2019.

\bibitem[Jing \& Tian(2021)Jing and Tian]{sslsurvey}
Longlong Jing and Yingli Tian.
\newblock Self-supervised visual feature learning with deep neural networks: a
  survey.
\newblock \emph{IEEE Transactions on Pattern Analysis and Machine
  Intelligence}, 43\penalty0 (11):\penalty0 4037--4058, 2021.

\bibitem[Kang et~al.(2022)Kang, Park, and Han]{afc}
Minsoo Kang, Jaeyoo Park, and Bohyung Han.
\newblock Class-incremental learning by knowledge distillation with adaptive
  feature consolidation.
\newblock In \emph{IEEE Conference on Computer Vision and Pattern Recognition},
  pp.\  16071--16080, June 2022.

\bibitem[Ke et~al.(2020)Ke, Liu, and Huang]{cat}
Zixuan Ke, Bing Liu, and Xingchang Huang.
\newblock Continual learning of a mixed sequence of similar and dissimilar
  tasks.
\newblock In \emph{Advances in Neural Information Processing Systems},
  volume~33, pp.\  18493--18504, 2020.

\bibitem[Kirkpatrick et~al.(2017)Kirkpatrick, Pascanu, Rabinowitz, Veness,
  Desjardins, Rusu, Milan, Quan, Ramalho, Grabska-Barwinska, et~al.]{ewc}
James Kirkpatrick, Razvan Pascanu, Neil Rabinowitz, Joel Veness, Guillaume
  Desjardins, Andrei~A Rusu, Kieran Milan, John Quan, Tiago Ramalho, Agnieszka
  Grabska-Barwinska, et~al.
\newblock Overcoming catastrophic forgetting in neural networks.
\newblock \emph{Proceedings of the National Academy of Sciences}, 114\penalty0
  (13):\penalty0 3521--3526, 2017.

\bibitem[Kong et~al.(2022)Kong, Liu, Wang, and Tao]{adns}
Yajing Kong, Liu Liu, Zhen Wang, and Dacheng Tao.
\newblock Balancing stability and plasticity through advanced null space in
  continual learning.
\newblock In \emph{European Conference on Computer Vision}, pp.\  219--236,
  2022.

\bibitem[Krizhevsky(2012)]{cifar10}
Alex Krizhevsky.
\newblock Learning multiple layers of features from tiny images.
\newblock \emph{University of Toronto}, 05 2012.

\bibitem[LeCun \& Cortes(2010)LeCun and Cortes]{mnist}
Yann LeCun and Corinna Cortes.
\newblock {MNIST} handwritten digit database.
\newblock 2010.
\newblock URL \url{http://yann.lecun.com/exdb/mnist/}.

\bibitem[Li et~al.(2023)Li, Wang, Sun, and Xu]{vdfd}
Xiaorong Li, Shipeng Wang, Jian Sun, and Zongben Xu.
\newblock Variational data-free knowledge distillation for continual learning.
\newblock \emph{IEEE Transactions on Pattern Analysis and Machine
  Intelligence}, pp.\  1--17, 2023.

\bibitem[Li \& Hoiem(2017)Li and Hoiem]{lwf}
Zhizhong Li and Derek Hoiem.
\newblock Learning without forgetting.
\newblock \emph{IEEE Transactions on Pattern Analysis and Machine
  Intelligence}, 40\penalty0 (12):\penalty0 2935--2947, 2017.

\bibitem[Liu et~al.(2020)Liu, Parisot, Slabaugh, Jia, Leonardis, and
  Tuytelaars]{muc-mas}
Yu~Liu, Sarah Parisot, Gregory Slabaugh, Xu~Jia, Ales Leonardis, and Tinne
  Tuytelaars.
\newblock More classifiers, less forgetting: A generic multi-classifier
  paradigm for incremental learning.
\newblock In \emph{European Conference on Computer Vision}, pp.\  699--716,
  2020.

\bibitem[Lopez-Paz \& Ranzato(2017)Lopez-Paz and Ranzato]{gem}
David Lopez-Paz and Marc\textquotesingle~Aurelio Ranzato.
\newblock Gradient episodic memory for continual learning.
\newblock In \emph{Advances in Neural Information Processing Systems},
  volume~30, pp.\  6467--6476, 2017.

\bibitem[Mai et~al.(2022)Mai, Li, Jeong, Quispe, Kim, and Sanner]{survey1}
Zheda Mai, Ruiwen Li, Jihwan Jeong, David Quispe, Hyunwoo Kim, and Scott
  Sanner.
\newblock Online continual learning in image classification: An empirical
  survey.
\newblock \emph{Neurocomputing}, 469:\penalty0 28--51, 2022.

\bibitem[Mallya \& Lazebnik(2018)Mallya and Lazebnik]{packnet}
Arun Mallya and Svetlana Lazebnik.
\newblock Packnet: Adding multiple tasks to a single network by iterative
  pruning.
\newblock In \emph{IEEE Conference on Computer Vision and Pattern Recognition},
  pp.\  7765--7773, 2018.

\bibitem[Mallya et~al.(2018)Mallya, Davis, and Lazebnik]{piggyback}
Arun Mallya, Dillon Davis, and Svetlana Lazebnik.
\newblock Piggyback: Adapting a single network to multiple tasks by learning to
  mask weights.
\newblock In \emph{European Conference on Computer Vision}, pp.\  67--82, 2018.

\bibitem[Masana et~al.(2022)Masana, Liu, Twardowski, Menta, Bagdanov, and
  van~de Weijer]{classilsurvey}
Marc Masana, Xialei Liu, Bartłomiej Twardowski, Mikel Menta, Andrew~D.
  Bagdanov, and Joost van~de Weijer.
\newblock Class-incremental learning: survey and performance evaluation on
  image classification.
\newblock \emph{IEEE Transactions on Pattern Analysis and Machine
  Intelligence}, pp.\  1--20, 2022.

\bibitem[McCloskey \& Cohen(1989)McCloskey and Cohen]{forget1}
Michael McCloskey and Neal~J Cohen.
\newblock Catastrophic interference in connectionist networks: The sequential
  learning problem.
\newblock In \emph{Psychology of Learning and Motivation}, volume~24, pp.\
  109--165. Elsevier, 1989.

\bibitem[Mermillod et~al.(2013)Mermillod, Bugaiska, and BONIN]{dilemma}
Martial Mermillod, Aurélia Bugaiska, and Patrick BONIN.
\newblock The stability-plasticity dilemma: investigating the continuum from
  catastrophic forgetting to age-limited learning effects.
\newblock \emph{Frontiers in Psychology}, 4, 2013.

\bibitem[Miao et~al.(2022)Miao, Wang, Chen, and Qiu]{fas}
Zichen Miao, Ze~Wang, Wei Chen, and Qiang Qiu.
\newblock Continual learning with filter atom swapping.
\newblock In \emph{International Conference on Learning Representations}, 2022.

\bibitem[Mirzadeh et~al.(2021)Mirzadeh, Farajtabar, Gorur, Pascanu, and
  Ghasemzadeh]{mc-sgd}
Seyed~Iman Mirzadeh, Mehrdad Farajtabar, Dilan Gorur, Razvan Pascanu, and
  Hassan Ghasemzadeh.
\newblock Linear mode connectivity in multitask and continual learning.
\newblock In \emph{International Conference on Learning Representations}, 2021.

\bibitem[Pham et~al.(2021)Pham, Liu, and Hoi]{dualnet}
Quang Pham, Chenghao Liu, and Steven Hoi.
\newblock Dualnet: continual learning, fast and slow.
\newblock In \emph{Advances in Neural Information Processing Systems},
  volume~34, pp.\  16131--16144, 2021.

\bibitem[Rusu et~al.(2016)Rusu, Rabinowitz, Desjardins, Soyer, Kirkpatrick,
  Kavukcuoglu, Pascanu, and Hadsell]{pnn}
Andrei~A Rusu, Neil~C Rabinowitz, Guillaume Desjardins, Hubert Soyer, James
  Kirkpatrick, Koray Kavukcuoglu, Razvan Pascanu, and Raia Hadsell.
\newblock Progressive neural networks.
\newblock \emph{arXiv preprint arXiv:1606.04671}, 2016.

\bibitem[Saha et~al.(2021)Saha, Garg, and Roy]{gpm}
Gobinda Saha, Isha Garg, and Kaushik Roy.
\newblock Gradient projection memory for continual learning.
\newblock In \emph{International Conference on Learning Representations}, 2021.

\bibitem[Shi et~al.(2021)Shi, Yuan, Chen, and Feng]{blip}
Yujun Shi, Li~Yuan, Yunpeng Chen, and Jiashi Feng.
\newblock Continual learning via bit-level information preserving.
\newblock In \emph{IEEE Conference on Computer Vision and Pattern Recognition},
  pp.\  16674--16683, June 2021.

\bibitem[Shin et~al.(2017)Shin, Lee, Kim, and Kim]{dgr}
Hanul Shin, Jung~Kwon Lee, Jaehong Kim, and Jiwon Kim.
\newblock Continual learning with deep generative replay.
\newblock In \emph{Advances in Neural Information Processing Systems},
  volume~30, 2017.

\bibitem[Smith et~al.(2021)Smith, Hsu, Balloch, Shen, Jin, and Kira]{abd}
James Smith, Yen-Chang Hsu, Jonathan Balloch, Yilin Shen, Hongxia Jin, and
  Zsolt Kira.
\newblock Always be dreaming: A new approach for data-free class-incremental
  learning.
\newblock In \emph{IEEE Conference on Computer Vision and Pattern Recognition},
  October 2021.

\bibitem[Stanford(2015)]{tinyimagenet}
Stanford.
\newblock Tiny imagenet challenge (cs231n).
\newblock 2015.
\newblock URL \url{http://tiny-imagenet.herokuapp.com/}.

\bibitem[Sun et~al.(2021)Sun, Zhang, Wang, Geng, and Li]{ilcoc}
Wenju Sun, Jing Zhang, Danyu Wang, Yangli-ao Geng, and Qingyong Li.
\newblock Ilcoc: An incremental learning framework based on contrastive
  one-class classifiers.
\newblock In \emph{IEEE Conference on Computer Vision and Pattern Recognition
  Workshops}, pp.\  3580--3588, 2021.

\bibitem[Sun et~al.(2023{\natexlab{a}})Sun, Li, Zhang, Wang, Wang, and
  ao~Geng]{dcpoc}
Wenju Sun, Qingyong Li, Jing Zhang, Danyu Wang, Wen Wang, and YangLi ao~Geng.
\newblock Exemplar-free class incremental learning via discriminative and
  comparable parallel one-class classifiers.
\newblock \emph{Pattern Recognition}, 140:\penalty0 109561, 2023{\natexlab{a}}.

\bibitem[Sun et~al.(2023{\natexlab{b}})Sun, Li, Zhang, Wang, and Geng]{bmkp}
Wenju Sun, Qingyong Li, Jing Zhang, Wen Wang, and Yangli-ao Geng.
\newblock Decoupling learning and remembering: A bilevel memory rramework with
  knowledge projection for task-incremental learning.
\newblock In \emph{IEEE Conference on Computer Vision and Pattern Recognition},
  June 2023{\natexlab{b}}.

\bibitem[Van~de Ven \& Tolias(2019)Van~de Ven and Tolias]{setting}
Gido~M Van~de Ven and Andreas~S Tolias.
\newblock Three scenarios for continual learning.
\newblock \emph{arXiv preprint arXiv:1904.07734}, 2019.

\bibitem[Wang et~al.(2021)Wang, Li, Sun, and Xu]{adam-nscl}
Shipeng Wang, Xiaorong Li, Jian Sun, and Zongben Xu.
\newblock Training networks in null space of feature covariance for continual
  learning.
\newblock In \emph{IEEE Conference on Computer Vision and Pattern Recognition},
  pp.\  184--193, 2021.

\bibitem[Wang et~al.(2020)Wang, Jian, Chowdhury, Wang, Dy, and Ioannidis]{lps}
Zifeng Wang, Tong Jian, Kaushik Chowdhury, Yanzhi Wang, Jennifer Dy, and
  Stratis Ioannidis.
\newblock Learn-prune-share for lifelong learning.
\newblock In \emph{IEEE International Conference on Data Mining}, pp.\
  641--650, 2020.

\bibitem[Wang et~al.(2022)Wang, Zhan, Gong, Yuan, Niu, Jian, Ren, Ioannidis,
  Wang, and Dy]{sparcl}
Zifeng Wang, Zheng Zhan, Yifan Gong, Geng Yuan, Wei Niu, Tong Jian, Bin Ren,
  Stratis Ioannidis, Yanzhi Wang, and Jennifer Dy.
\newblock Sparcl: sparse continual learning on the rdge.
\newblock In Alice~H. Oh, Alekh Agarwal, Danielle Belgrave, and Kyunghyun Cho
  (eds.), \emph{Advances in Neural Information Processing Systems}, 2022.

\bibitem[Wu et~al.(2021)Wu, Gong, and Li]{spb}
Guile Wu, Shaogang Gong, and Pan Li.
\newblock Striking a balance between stability and plasticity for
  class-incremental learning.
\newblock In \emph{IEEE International Conference on Computer Vision}, pp.\
  1124--1133, October 2021.

\bibitem[Wu et~al.(2019)Wu, Chen, Wang, Ye, Liu, Guo, and Fu]{bic}
Yue Wu, Yinpeng Chen, Lijuan Wang, Yuancheng Ye, Zicheng Liu, Yandong Guo, and
  Yun Fu.
\newblock Large scale incremental learning.
\newblock In \emph{IEEE Conference on Computer Vision and Pattern Recognition},
  pp.\  374--382, 2019.

\bibitem[Yin et~al.(2020)Yin, Molchanov, Alvarez, Li, Mallya, Hoiem, Jha, and
  Kautz]{di}
Hongxu Yin, Pavlo Molchanov, Jose~M. Alvarez, Zhizhong Li, Arun Mallya, Derek
  Hoiem, Niraj~K. Jha, and Jan Kautz.
\newblock Dreaming to distill: data-free knowledge transfer via deepinversion.
\newblock In \emph{IEEE Conference on Computer Vision and Pattern Recognition},
  June 2020.

\bibitem[Yoon et~al.(2020)Yoon, Kim, Yang, and Hwang]{apd}
Jaehong Yoon, Saehoon Kim, Eunho Yang, and Sung~Ju Hwang.
\newblock Scalable and order-robust continual learning with additive parameter
  decomposition.
\newblock In \emph{International Conference on Learning Representations}, 2020.

\bibitem[Zeng et~al.(2019)Zeng, Chen, Cui, and Yu]{OWM}
Guanxiong Zeng, Yang Chen, Bo~Cui, and Shan Yu.
\newblock Continual learning of context-dependent processing in neural
  networks.
\newblock \emph{Nature Machine Intelligence}, 1\penalty0 (8):\penalty0
  364--372, 2019.

\bibitem[Zenke et~al.(2017)Zenke, Poole, and Ganguli]{si}
Friedemann Zenke, Ben Poole, and Surya Ganguli.
\newblock Continual learning through synaptic intelligence.
\newblock In \emph{International Conference on Machine Learning}, pp.\
  3987--3995, 2017.

\bibitem[Zhou et~al.(2023)Zhou, Wang, Qi, Ye, Zhan, and Liu]{survey3}
Da-Wei Zhou, Qi-Wei Wang, Zhi-Hong Qi, Han-Jia Ye, De-Chuan Zhan, and Ziwei
  Liu.
\newblock Deep class-incremental learning: a survey.
\newblock \emph{arXiv preprint arXiv:2302.03648}, 2023.

\bibitem[Zhu et~al.(2021)Zhu, Zhang, Wang, Yin, and Liu]{pass}
Fei Zhu, Xu-Yao Zhang, Chuang Wang, Fei Yin, and Cheng-Lin Liu.
\newblock Prototype augmentation and self-supervision for incremental learning.
\newblock In \emph{IEEE Conference on Computer Vision and Pattern Recognition},
  pp.\  5871--5880, June 2021.

\end{thebibliography}
\bibliographystyle{iclr2024_conference}

\appendix

\section{Pseudo Code of DLCPA Training}

As depicted in Figure~\ref{fig:process}, the training of the DLCPA framework for task $t$ involves a three-stage loop: plastic learner learning, stable learner updating, and classifier training. Within each task, the three-step iteration \textbf{repeats} multiple times until the model converges. We outline the training process of DLCPA in Algorithm~\ref{alg:training}.

\newcommand\mycommfont[1]{\footnotesize\ttfamily\textcolor{blue}{#1}}
\SetCommentSty{mycommfont}
\renewcommand\arraystretch{0.8}
\begin{algorithm}[h]
    \caption{Pseudo code of DLCPA Training}
    \label{alg:training}
    \KwIn{Datasets $\{D_1,\dots,D_T\}$}
    \KwOut{Stable learner $\Phi$; Classifiers $\{\Gamma_1,\dots,\Gamma_T\}$}
    Initialize $\Theta$, $\Phi$, $\{\Gamma_1,\dots,\Gamma_T\}$;\\ 
    \For{$t=1,\dots,T$}{
        \For{$(\mathcal{X},\mathcal{Y})\sim D_t$}{  
            $\Theta \leftarrow\text{Plastic\_Learner\_Learning}(\mathcal{X}, \mathcal{Y}, \Theta, \Gamma_t)$ 
            // Refer to \eqref{equation:online_loss}\\
            $\hat{\Phi} \longleftarrow\text{Stable\_Learner\_Updating}(\Theta, \Phi)$ 
            // Refer to \eqref{equation:target}\\
            $\Gamma_t \longleftarrow\text{Classifier\_Training}(\mathcal{X}, \mathcal{Y}, \hat{\Phi}, \Gamma_t)$ 
            // Refer to \eqref{equation:classifier_loss}\\
        }
         $\Phi \longleftarrow \hat{\Phi}$;
    }
\end{algorithm}

\section{Single Task Learning} \label{sec:appendix_stl}

Single Task Learning (STL) achieves an upper-bound performance in Task-IL \citep{apd}. It trains a distinct network for each task and makes inferences based on the task identifications of queries. This study explores a more stable version of STL that initializes the new-task model $\Theta_t$ with the previous model $\Theta_{t-1}$, as detailed in Algorithm~\ref{alg:stl}.

\SetCommentSty{mycommfont}
\begin{algorithm}[h]
    \caption{Pseudo code of Single Task Learning}
    \label{alg:stl}
    \KwIn{Datasets $\{D_1,\dots,D_T\}$}
    \KwOut{ 
        Feature extractors $\Theta\_arr = \{\Theta_1,\dots,\Theta_T \}$; \\ 
        Classifiers $\Gamma\_arr = \{\Gamma_1,\dots,\Gamma_T \}$}
    Initialize $\Theta$, $\Theta\_arr = \{\ \}$, $\Gamma\_arr = \{\ \}$;\\ 
    \For{$t=1,\dots,T$}{
        Initialize $\Gamma_t$ \\
        \For{$(\mathcal{X},\mathcal{Y})$ in $D_t$}{
            $\Theta \leftarrow L_{sl}(\Gamma_t^{\top}f(\mathcal{X}; \Theta), \mathcal{Y})$ \\
        }
        $\Theta_t = \Theta$ \\
        $\Theta\_arr = \Theta\_arr \cup \{\Theta_t\}$ \\
        $\Gamma\_arr = \Gamma\_arr \cup \{\Gamma_t\}$ \\
    }
\end{algorithm}

\begin{figure}[t]
  \centering
  \includegraphics[width=0.6\columnwidth]{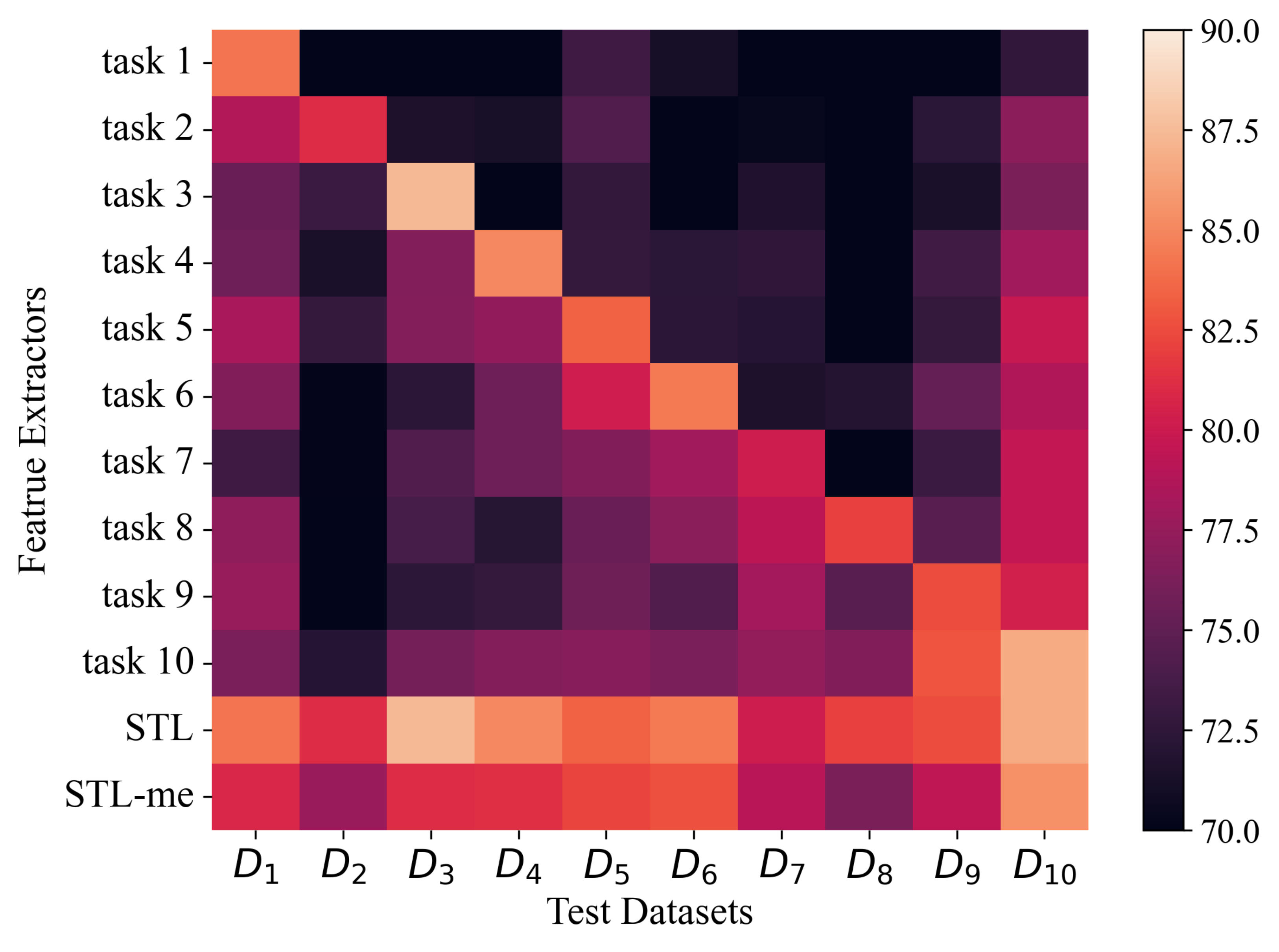}
  \caption{Effectiveness evaluation of various feature extractors on 10-split CIFAR100.
  The heatmap presents the linear probe accuracy (\%) on the dataset of each task by using various feature extractors. These include individual task feature extractors, STL (i.e., choosing the best extractor for every single task), and STL-me (using the mean extractor).}
  \label{fig:linear_probe}
\end{figure}

\begin{figure*}[t]
    \centering
    \includegraphics[width=1.0\textwidth]{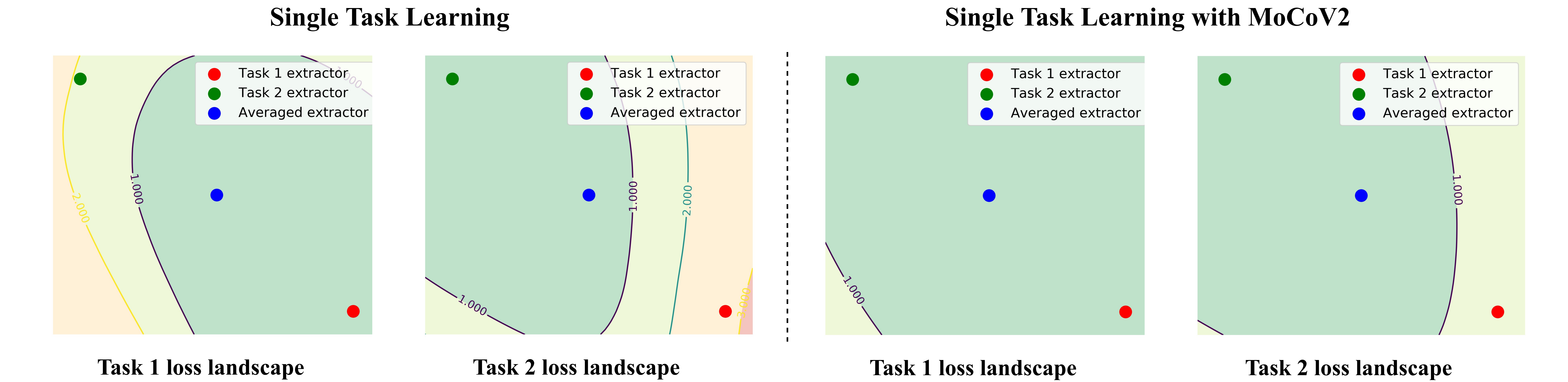} % Reduce the figure size so that it is slightly narrower than the column. Don't use precise values for figure width.This setup will avoid overfull boxes.
    \caption{
        Cross-entropy validation loss surface for task 1 and task 2 on CIFAR-100. STL is updated with supervised loss (left) in comparison to with both supervised and self-supervised loss (right).
    }
    \label{fig:landscape}
\end{figure*}

\section{Effectiveness Analysis of STL-me} \label{sec:appendix_analysis}
This section analyzes the effectiveness of STL-me, which is built by averaging the feature extractors of STL in the parameter space.

\subsection{Discriminability of Mean Extractor}
We first empirically evaluate the discriminability of the features extracted by STL-me through linear probing on 10-split CIFAR100. Figure~\ref{fig:linear_probe} presents the results for each task-specific extractor in STL, STL, and STL-me. It is evident that each task-specific extractor can only extract discriminative features for its corresponding task data, failing for other tasks. In contrast, STL-me is capable of extracting informative features and nearly matches the STL performance for almost all tasks. This suggests that the parameter averaging operation effectively consolidates knowledge across all extractors.

\subsection{Performance Reduction Bound of STL-me}
Next, we provide an upper bound for the performance reduction of STL-me compared to STL. Prior to the analysis, we introduce two assumptions.

\begin{assumption} \label{assumption_2}
Consider a well-trained STL model with $T$ task feature extractors $\{\Theta_1,\dots,\Theta_T\}$ and classifiers $\{\Gamma_1,\dots,\Gamma_T\}$. The loss function $L_{sl}(\Gamma_t^{\top}f(x_t; \Theta_t), y_t)$ is Lipschitz continuous \citep{lider} with respect to $\Theta_t$. That is, for any task $t$, there exists a positive real number $k$ such that
\begin{equation}
\begin{aligned}
& (L_{sl}(\Gamma_t^{\top}f(x_t; \Theta_t + \Delta \Theta), y_t) -  L_{sl}(\Gamma_t^{\top}f(x_t; \Theta_t), y_t))^2 \\ 
&\leq k ||\Delta \Theta||^2.
\label{equation:assuption_2} 
\end{aligned}
\end{equation}
\end{assumption}

\begin{assumption} \label{assumption_3}
For all $T$ tasks, the number of iterations is bounded by an integer $S$.
\end{assumption}

\begin{thm}\label{thm2}
Assuming that Assumptions \ref{assumption_2} and \ref{assumption_3} hold, let $\eta$ be the learning rate. Then, the loss of task $t$ after parameter averaging is upper bounded by 
\begin{equation}
\begin{aligned}
&(L_{sl}(\Gamma_t^{\top}f(x_t; \frac{1}{T} \sum_{i=1}^T \Theta_{i}), y_t)          
- L_{sl}(\Gamma_t^{\top}f(x_t; \Theta_{t}), y_t))^2 \\
&\leq k(\frac{TS\eta}{2})^2.
\label{equation:theorem_2} 
\end{aligned}
\end{equation}
\end{thm}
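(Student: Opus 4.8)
The plan is to reduce the claimed loss gap to a statement about how far the mean extractor $\bar{\Theta} = \frac{1}{T}\sum_{i=1}^{T}\Theta_i$ sits from the task-$t$ extractor $\Theta_t$ in parameter space, and then to control that distance using the sequential training structure of STL. First I would instantiate Assumption~\ref{assumption_2} with the perturbation $\Delta\Theta = \bar{\Theta}-\Theta_t$. Since $\Theta_t+\Delta\Theta=\bar{\Theta}$ is exactly the averaged parameter appearing inside $f$ in \eqref{equation:theorem_2}, the Lipschitz bound \eqref{equation:assuption_2} immediately gives
\begin{equation}
\left(L_{sl}\!\left(\Gamma_t^{\top}f(x_t;\bar{\Theta}),y_t\right) - L_{sl}\!\left(\Gamma_t^{\top}f(x_t;\Theta_t),y_t\right)\right)^2 \le k\,||\bar{\Theta}-\Theta_t||^2 .
\end{equation}
It therefore suffices to prove $||\bar{\Theta}-\Theta_t|| \le \tfrac{TS\eta}{2}$.

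Next I would expand the mean and apply the triangle inequality, writing $\bar{\Theta}-\Theta_t = \frac{1}{T}\sum_{i=1}^{T}(\Theta_i-\Theta_t)$, so that $||\bar{\Theta}-\Theta_t|| \le \frac{1}{T}\sum_{i=1}^{T}||\Theta_i-\Theta_t||$. The key structural input is the stationary STL scheme of Algorithm~\ref{alg:stl}, in which $\Theta_j$ is initialized from $\Theta_{j-1}$ and then refined by at most $S$ gradient steps (Assumption~\ref{assumption_3}). Reading each step as a parameter displacement of size at most $\eta$, a single task moves the parameters by at most $S\eta$; telescoping $\Theta_i-\Theta_t = \sum_{j}(\Theta_j-\Theta_{j-1})$ along the chain of tasks between $t$ and $i$ then yields $||\Theta_i-\Theta_t|| \le |i-t|\,S\eta$.

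Finally I would substitute this estimate into the averaged sum and evaluate $\frac{1}{T}\sum_{i=1}^{T}|i-t|$. This quantity is maximized at the endpoints $t=1$ or $t=T$, where it equals $\frac{1}{T}\cdot\frac{T(T-1)}{2}=\frac{T-1}{2}\le\frac{T}{2}$, whence $||\bar{\Theta}-\Theta_t|| \le \frac{T-1}{2}S\eta \le \frac{TS\eta}{2}$. Squaring this and combining it with the first display produces the stated bound $k(\tfrac{TS\eta}{2})^2$.

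The main obstacle is the per-task displacement estimate $||\Theta_i-\Theta_{i-1}||\le S\eta$: it hinges on treating each SGD update as moving the parameters by at most $\eta$, which tacitly requires the per-step gradient (or its normalization) to be bounded by one. I would make this hypothesis explicit, because without a bound on the gradient magnitude the clean factor $\eta$ cannot appear in the conclusion. The only other delicate point is the constant $\tfrac12$: the crude estimate $|i-t|\le T$ would yield only $TS\eta$, so the sharper arithmetic-series evaluation of $\frac{1}{T}\sum_{i}|i-t|$ is precisely what delivers the claimed halving.
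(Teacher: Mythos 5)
Your proposal follows essentially the same route as the paper's proof: both view each $\Theta_i$ as $\Theta_t$ displaced by at most $|i-t|S$ gradient steps of size at most $\eta$, sum the resulting arithmetic series to obtain the factor $T/2$, and finish by invoking the Lipschitz assumption. Your execution is in fact cleaner --- you bound $\|\bar{\Theta}-\Theta_t\|$ directly via the triangle inequality in parameter space and apply Lipschitz continuity once, rather than chaining loss-value inequalities through a ``unit vector in the loss ascent direction'' as the paper does, and you correctly make explicit the hidden hypothesis (per-step parameter displacement bounded by $\eta$, i.e.\ unit-bounded gradients) on which the paper's argument also silently relies.
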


\begin{proof}
Let $\Theta_{t, s}$ denote the feature extractor that is further updated $s$ steps after the completion of learning task $t$. Specifically, $\Theta_{t} = \Theta_{t, 0}$ represents the extractor specific to task $t$. Then we have
\begin{equation}
\begin{aligned}
& L_{sl}(\Gamma_t^{\top}f(x_t; \frac{1}{T} \sum_{i=1}^T \Theta_{i}), y_t)                            \\
\leq & L_{sl}(\Gamma_t^{\top}f(x_t; \frac{1}{T} (\Theta_{t, -(t-1)S} + \dots + \Theta_{t, 0} + \dots \\
&  +  \Theta_{t, (T-t)S}), y_t)    \\               
\leq  & L_{sl}(\Gamma_t^{\top}f(x_t; \frac{1}{T} (T \Theta_t + \frac{T^2}{2}S\eta e ), y_t)  \\
 =  & L_{sl}(\Gamma_t^{\top}f(x_t; \Theta_t + \frac{T}{2}S\eta e ), y_t),
\label{equation:proof_2} 
\end{aligned}
\end{equation}
where $e$ is a unit vector in the loss ascent direction.

Therefore, by using the Lipschitz continuity, we derive
\begin{equation}
\begin{aligned}
&(L_{sl}(\Gamma_t^{\top}f(x_t; \frac{1}{T} \sum_{i=1}^T \Theta_{i}), y_t)          
- L_{sl}(\Gamma_t^{\top}f(x_t; \Theta_{t}), y_t))^2 \\
&\leq k(\frac{TS\eta}{2})^2.
\label{equation:theorem_2_proof} 
\end{aligned}
\end{equation}
\end{proof}

Theorem~\ref{thm2} suggests that the flatness of the loss landscape, denoted as $k$ in Eq.~\eqref{equation:theorem_2}, is pivotal to the performance of STL-me. To explore the sharpness, we illustrate the loss landscape of STL on the first two tasks of CIFAR100 in Figure~\ref{fig:landscape}, following the approach of \citet{mc-sgd}. As observed, the loss surfaces of both tasks 1 and 2, over the linear combination of the task-1 feature extractor and task-2 feature extractor, exhibit smoothness. 
This observation leads us to conclude that Assumption \ref{assumption_2} holds with a lower $k$ value in the range between the averaged extractor and task-specific extractors. Consequently, their average provides a suitable solution for both tasks. This is plausible as different incremental tasks share similar underlying knowledge, even if the classes to be recognized are distinct, a common assumption in incremental learning \citep{cat}.

On the other hand, reducing $k$ diminishes the upper bound of performance degradation and further enhances the effectiveness of parameter averaging. This insight motivates us to incorporate self-supervised techniques during learning. As shown in Figure~\ref{fig:landscape} (right), training with self-supervised learners flattens the loss landscapes. Therefore, we equip DLCPA with self-supervised learners to prompt the plastic learner to concentrate more on task-independent knowledge, thereby enhancing the impact of cumulative average updating.

\section{Experimental Details}

This section provides some details of our experiments, including the experimental environment, the hyperparameters used for the proposed DLCPA and main baselines under comparison, and the statistics of datasets.

\subsection{Experimental Environment}
All experiments reported in our manuscript and appendix are conducted on a workstation running OS Ubuntu 16.04 with 18 Intel Xeon 2.60GHz CPUs, 256 GB memory, and 6 NVIDIA RTX3090 GPUs. All methods are implemented with Python 3.8.

\begin{table*}
  \caption{
Hyperparameters for the baseline models and our DLCPA. In this context, "KD" stands for "Knowledge Distillation"
  }
  \label{tab:hyper}
  \centering
  \setlength{\tabcolsep}{1mm}{
  \begin{tabular}{l|l}
    \toprule
    \textbf{Methods}       & \textbf{Hyperparameters}     \\
    \midrule              
    GEM \citep{gem}         &  \makecell[l]{
      learning rate: 0.03 (CIFAR-100), 0.05 (Tiny-ImageNet)  \\
      batch size: 32 (CIFAR-100), 64 (Tiny-ImageNet)
    }                \\
    \midrule 
    DER \citep{der}         & \makecell[l]{
      learning rate: 0.03  
      batch size: 32  
      KD weight: 0.1 
    }                \\
    \midrule   
    DER++ \citep{der}         & \makecell[l]{
      learning rate: 0.03  
      batch size: 32  
      KD weight $\alpha$: 0.1 
      $\beta$: 0.5
    }                \\
    \midrule      
    % LwF \citep{lwf}         & \makecell[l]{
    %   learning rate: 0.01 
    %   batch size: 32 
    %   tempture: 2
    %   KD weight: 5e-4
    % }                \\
    % \midrule              
    % SI \citep{si}           &  \makecell[l]{
    %   learning rate: 0.1 (CIFAR-100), 0.05 (Tiny-ImageNet)  \\
    %   batch size: 16 
    %   regularizer weight: 0.5
    % }                \\
    % \midrule              
    OWM \citep{OWM}           &  \makecell[l]{
      learning rate: 0.1  
      batch size: 64 
    }                \\
    \midrule   
    ILCOC \citep{ilcoc}         & \makecell[l]{
      learning rate: 2e-3  
      batch size: 32    
      $\alpha_1$: 0.7
      $\alpha_2$: 0.5 
    }                \\
    \midrule   
    PASS \citep{pass}       &  \makecell[l]{
      learning rate: 1e-4 (Tiny-ImageNet), 5e-4 (CIFAR-100)  \\
      batch size: 64 (CIFAR-100), 32 (Tiny-ImageNet) \\
      KD weight: 10 (Tiny-ImageNet), 0.2 (CIFAR-100)\\
      prototype weight: 0.05 (CIFAR-100), 0.5 (Tiny-ImageNet) 
    }                \\
    \midrule 
    % GPM \citep{gpm}         &  \makecell[l]{
    %   learning rate: 0.02 (CIFAR-10), 0.01 (CIFAR-100), 0.02 (Tiny-ImageNet)  \\
    %   batch size: 64 (CIFAR-10, CIFAR-100), 32 (Tiny-ImageNet)
    % }                \\
    % \midrule    
    % Adam-NSCL \citep{adam-nscl} &  \makecell[l]{
    %   learning rate: 1e-5 (CIFAR-10), 1e-4 (CIFAR-100), 5e-5 (Tiny-ImageNet)  \\
    %   batch size: 32 (CIFAR-10, CIFAR-100), 16 (Tiny-ImageNet)
    % }                     \\
    % \midrule     
    DCPOC \citep{dcpoc}       &  \makecell[l]{
      learning rate: 5e-5  \\
      batch size: 8 (CIFAR-100), 32 (Tiny-ImageNet) \\
      $\lambda_1$: 20 (CIFAR-100), 10 (Tiny-ImageNet) \\
      $\lambda_2$: 10 (CIFAR-100), 0.01 (Tiny-ImageNet) 
    }                \\
    \midrule  
    FAS  \citep{fas}         &  \makecell[l]{
      learning rate: 0.001 (CIFAR-100), 5e-4 (Tiny-ImageNet)  \\
      batch size: 32 (CIFAR-100), 16 (Tiny-ImageNet) 
    }     \\
    \midrule              
    DLCPA (ours)             &  \makecell[l]{
      learning rate: 0.005 (CIFAR-100), 0.02 (Tiny-ImageNet)  \\
      batch size: 32 (CIFAR-100), 512 (Tiny-ImageNet) \\
      self-supervised loss weight $\lambda$: 10 
    }                \\
    \bottomrule
  \end{tabular}
  }
\end{table*}

\begin{table*}[t]
  \caption{
    Dataset statistics.
  }
  \label{tab:data}
  \centering
  \begin{tabular}{l|ccc}
    \toprule
    Dataset                         & \textbf{CIFAR-100}  & \textbf{Tiny-ImageNet}          \\
    \midrule
    Input size                      & $3 \times 32 \times 32$        & $3 \times 64 \times 64$    \\
    \# Classes                      & 100               & 200    \\
    \# Training samples per class   & 450               & 450    \\
    \# Validation samples per class & 50                & 50    \\
    \# Testing samples per class    & 100               & 100    \\
    \bottomrule
  \end{tabular}
\end{table*}

\subsection{List of Hyperparameters}

Table~\ref{tab:hyper} summarizes the hyperparameters that were tuned for the baseline methods we implemented, as well as for our proposed DLCPA.

\subsection{Dataset Statistics}
In this work, we utilize two classification datasets, CIFAR-100 \citep{cifar10} and Tiny-ImageNet \citep{tinyimagenet}, to evaluate the proposed DLCPA. The statistical details of these two datasets are provided in Table~\ref{tab:data}.

\section{Additional Comparisons}

This section presents additional experimental results. These supplementary experiments encompass (i) an analysis of plasticity and stability, (ii) a comparison of confusion matrices, (iii) a comparison of Class-IL with varying numbers of tasks, (iv) a comparison with mask-based incremental learning methods, (v) a comparison with a larger network backbone, and (vi) an analysis of DLCPA's robustness to task order.

\begin{figure*}[t]
  \centering
  \includegraphics[width=0.98\textwidth]{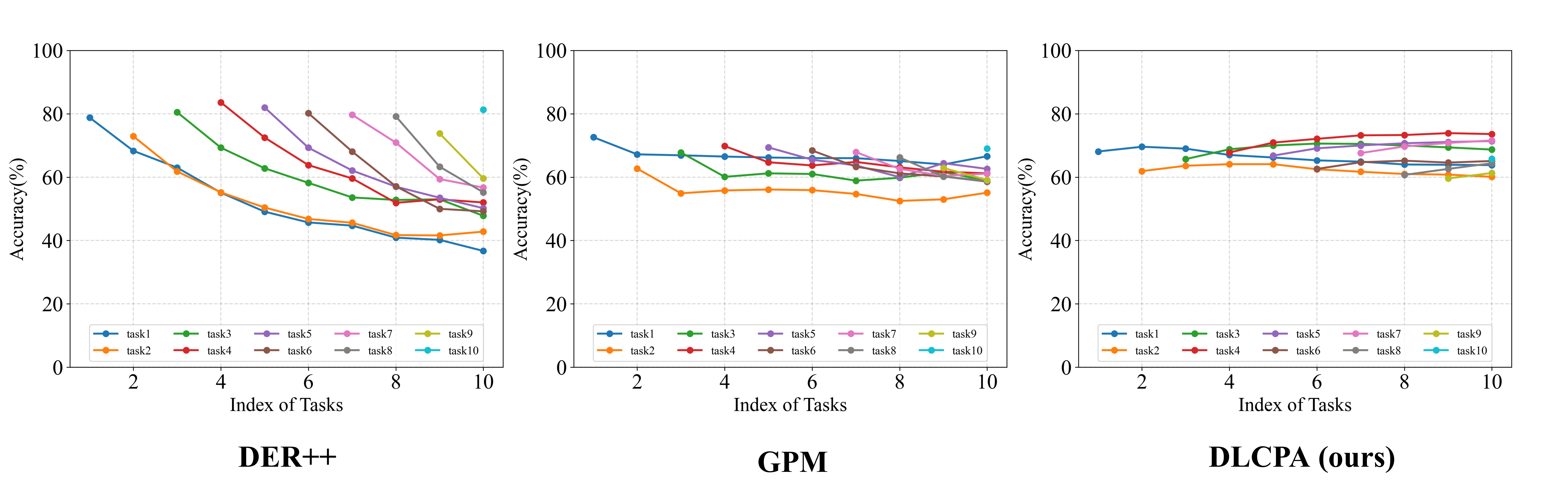}
  \caption{Diagram of ACC (\%) for each task during task incremental learning on Tiny-ImageNet of DER++ (left), GPM (middle), and DLCPA (right).}
  \label{fig:ps}
\end{figure*}

\begin{figure*}[t]
  \centering
  \includegraphics[width=0.98\textwidth]{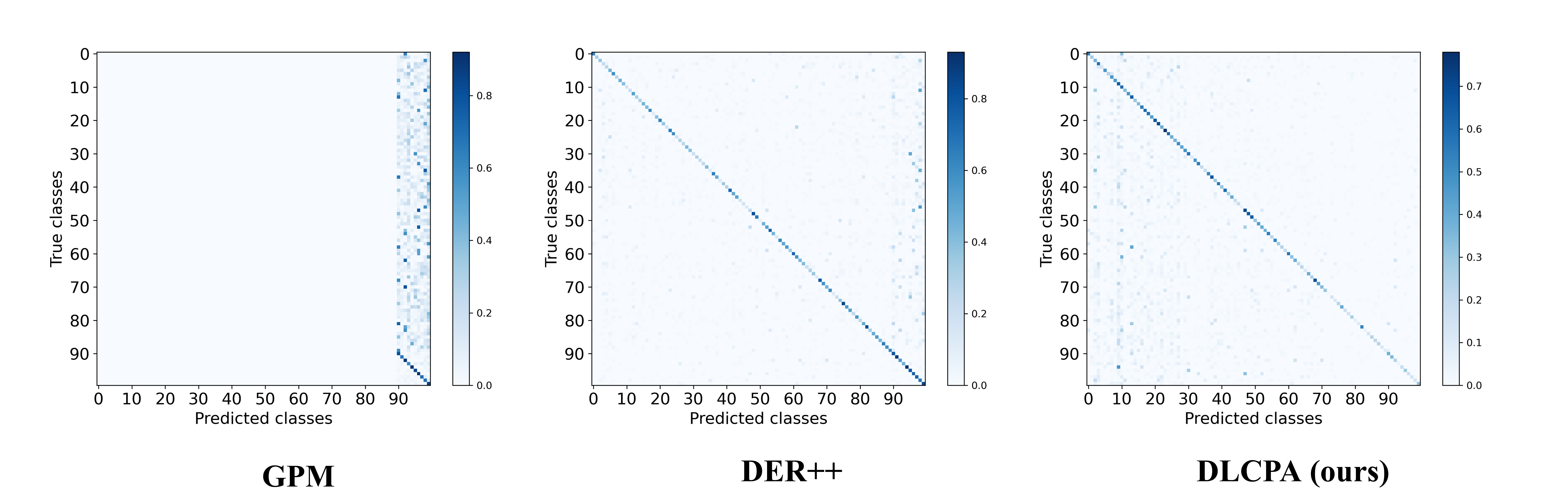}
  \caption{Confusion matrix of DER++, GPM, and DLCPA on 10-Split CIFAR-100.}
  \label{fig:confusion}
\end{figure*}

\subsection{Plasticity and Stability Analysis}

We plot the performance of each task during the incremental learning of DER++, GPM, and DLCPA to examine their plasticity and stability. As depicted in Figure~\ref{fig:ps}, for DER++, peak performance for each task is achieved after training completion, and the maximum performance for each task is the highest among the three methods. This observation attests to DER++'s high plasticity. However, its stability is lacking, as each task's performance declines immediately after learning subsequent tasks. GPM, on the other hand, maintains a more stable performance for each task, suggesting that its gradient constraint mechanism ensures high stability. However, GPM's gradient projection technique may limit its ability to learn new knowledge, resulting in insufficient plasticity. DLCPA exhibits both high plasticity and stability. Furthermore, the performance of most tasks (except for task 1 and task 2) under DLCPA gradually improves during subsequent task learning due to its alternating update design and self-supervised techniques, which enhance its ability to transfer knowledge to old tasks.

\subsection{Confusion Matrix Comparision} \label{sec:confusion_matrix_comparision}

Classification bias is a well-known challenge in Class-IL \citep{bic}. We compute the confusion matrices of DER++, GPM, and DLCPA to assess the balance of their classifications. As illustrated in Figure~\ref{fig:confusion}, GPM's classification leans towards the classes of the last task. This is expected, as GPM is a Task-IL method and lacks a specific design to counteract classification bias. For DER++, the phenomenon of classification bias is significantly reduced, thanks to the stored old-task exemplars. Notably, DLCPA, even under an exemplar-free setting, achieves a task-wise balance similar to that of the exemplar-memory based DER++.

\begin{table}[h]
    \caption{
Class incremental learning results (ACC \%) on CIFAR-100 for various task numbers (5, 10, 20). Results marked with ``\dag'' are derived from \citet{abd}. 
    }
  \label{tab:tasknum_cil}
  \centering
  \setlength{\tabcolsep}{2mm}{
  \begin{tabular}{l|ccc}
    \toprule
        \textbf{Task Number}   & \textbf{5 tasks} & \textbf{10 tasks} & \textbf{20 tasks}           \\ 
    \midrule       
        DGR$^\dag$    & 14.4          & 8.1           & 4.1                \\
        LwF$^\dag$    & 17.0          & 9.2           & 4.7                \\ 
        DI$^\dag$     & 18.8          & 10.9          & 5.7                \\ 
        ABD$^\dag$    & 43.9          & \underline{33.7}& 20.0             \\ 
        PASS          & \underline{45.2}& 30.8        & 17.4               \\ 
        DCPOC         & 33.1          & 27.5          & \underline{20.5}   \\ 
    \midrule   
        DLCPA w/ MoCoV2 & \textbf{46.3} & \textbf{40.1} & \textbf{31.0}      \\ 
    \bottomrule        
  \end{tabular}
  }
\end{table}

\subsection{Class-IL Results with Various Number of Tasks}

We further evaluate the performance of DLCPA across different task sequence lengths under Class-IL. Following \citet{abd}, we conduct experiments on CIFAR-100 with 5, 10, and 20 tasks. The methods compared include DGR \citep{dgr}, LWF \citep{lwf}, DI \citep{di}, ABD \citep{abd}, PASS \citep{pass}, and DCPOC \citep{dcpoc}, with their performance detailed in Table~\ref{tab:tasknum_cil}. In the 5 task setting, DLCPA marginally surpasses the second-best method, PASS. However, when faced with a longer sequence of 20 tasks, DLCPA demonstrates a more significant performance gap, outperforming the second-best by 10.5\%. These results validate the superiority of DLCPA when learning longer tasks.

\begin{table}[h]
  \caption{
Comparison (ACC \%) with Mask-based methods on a 10-split Tiny-ImageNet. Results marked with ($\dag$) are derived from \citet{sparcl}.
  }
  \label{tab:mask}
  \centering
  \setlength{\tabcolsep}{2mm}{
      \begin{tabular}{m{6cm}|cc}
        \toprule
        \multirow{2}{*}{\textbf{Methods}}    & \multicolumn{2}{c}{\textbf{Tiny-ImageNet}}   \\
         & Class-IL & Task-IL \\
        \midrule
        PackNet$^{\dag}$ \citep{packnet} & -                        & 61.88$\pm$1.0               \\
        LPS$^{\dag}$ \citep{lps}         & -                        & 63.37$\pm$0.8               \\
        SparCL$^{\dag}$ \citep{sparcl}   & 20.75$\pm$0.9            & 52.19$\pm$0.4               \\
        \midrule
        DLCPA w/ MoCoV2 (Ours)          & \textbf{29.21$\pm$0.3} & \textbf{65.90$\pm$0.4}      \\
        \bottomrule
      \end{tabular}
  }
\end{table}

\subsection{Comparison with Mask-based Methods}
In line with \citet{sparcl}, we conducted experiments on Tiny-ImageNet, with the results presented in Table~\ref{tab:mask}. The mask-based methods compared include PackNet \citep{packnet}, LPS \citep{lps}, and SparCL \citep{sparcl}. Notably, SparCL is an exemplar-based method and maintains an additional exemplar memory with 500 samples. As observed, DLCPA significantly outperforms all comparison methods. Furthermore, DLCPA's performance is more stable, exhibiting the lowest variance.

\begin{table}[h]
    \caption{
    Result (ACC \%) using ResNet34 as the network backbone on a 10-split CIFAR-100 and a 10-split Tiny-ImageNet.
    }
    \label{tab:resnet34}
    \centering
    \setlength{\tabcolsep}{2mm}{
    \begin{tabular}{l|c|c}
      \toprule
      \multirow{2}{*}{\textbf{Methods}}   & \multicolumn{2}{c}{\textbf{CIFAR-100}}  \\
       & Class-IL & Task-IL  \\
      \midrule              
      DER++ \citep{der}          & 16.98          & 51.34             \\
      GPM \citep{gpm}            & -              & 66.16            \\
      PASS \citep{pass}          & 28.62          & 75.69            \\
      \midrule              
      DLCPA (ours)              & \textbf{40.69} & \textbf{82.01}   \\
      \bottomrule
      \multirow{2}{*}{\textbf{Methods}}   & \multicolumn{2}{c}{\textbf{Tiny-ImageNet}}  \\
       & Class-IL & Task-IL  \\
       \midrule              
      DER++ \citep{der}          & 19.54           &  50.48   \\
      GPM \citep{gpm}            & -               & 60.28   \\
      PASS \citep{pass}          & 28.48           & 63.02   \\
      \midrule              
      DLCPA (ours)              & \textbf{28.96}  & \textbf{65.87}    \\
      \bottomrule
    \end{tabular}
    }
\end{table}

\subsection{Results with Larger Network Backbone}

We conducted experiments using ResNet34\citep{resnet} as the network backbone, with the results reported in Table~\ref{tab:resnet34}. As observed, DLCPA achieves superior performance compared to three state-of-the-art methods (DER++ \citep{der}, GPM \citep{gpm}, and PASS \citep{pass}) across all settings, demonstrating DLCPA's robustness to network scale.

\begin{table}[h]
  \caption{
Incremental learning result (ACC \%) of DLCPA with MoCoV2 on a 10-split CIFAR-100 with five random task orders. 
  }
  \label{tab:task_order}
  \centering
  \setlength{\tabcolsep}{3mm}{
      \begin{tabular}{m{3cm}|cc}
        \toprule
        \multirow{2}{*}{\textbf{Methods}}    & \multicolumn{2}{c}{\textbf{CIFAR-100}}   \\
         & Class-IL & Task-IL \\
        \midrule
        Task order 1                         & 40.29                    & 82.04               \\
        Task order 2                         & 38.76                    & 81.39               \\
        Task order 3                         & 39.63                    & 81.66               \\
        Task order 4                         & 35.52                    & 81.23               \\
        Task order 5                         & 40.23                    & 81.31               \\
        \midrule
        Average                         & 38.89$\pm$1.77           & 81.53$\pm$0.29      \\
        \bottomrule
      \end{tabular}
  }
\end{table}

\subsection{Results with Random Task Orders}

This subsection investigates the robustness of DLCPA to task order. To this end, we randomly shuffle the task order on CIFAR-100 and retrain DLCPA incrementally five times. Table~\ref{tab:task_order} presents the results for each order and the average performance. As observed, except for an obvious performance decline in Order 4 of Class-IL, DLCPA exhibits relatively stable performance across different scenarios, indicating the robustness of DLCPA to the task order.

\end{document}